\setlist[enumerate]{leftmargin=.5in}
\setlist[itemize]{leftmargin=.5in}
\crefname{hypothesis}{Hypothesis}{Hypotheses}
\title{Invariant kernels on Riemannian symmetric spaces\,: a harmonic-analytic approach}
\author{Natha\"el Da Costa\thanks{Division of Mathematical Sciences, School of Physical and Mathematical Sciences, Nanyang Technological University, 21 Nanyang Link, 637371, Singapore}
\and Cyrus Mostajeran\footnotemark[1]
\and Juan-Pablo Ortega\footnotemark[1]
\and Salem Said\thanks{Laboratoire Jean Kuntzman, Université Grenoble-Alpes, Grenoble, 38400, France}}
\begin{document}

\maketitle

% REQUIRED
\begin{abstract}
This work aims to prove that the classical Gaussian kernel, when defined on a non-Euclidean symmetric space, is never positive-definite for any choice of parameter. To achieve this goal, the paper develops new geometric and analytical arguments. These provide a rigorous characterization of the positive-definiteness of the Gaussian kernel, which is complete but for a limited number of scenarios in low dimensions that are treated by numerical computations. Chief among these results are the L$^{\!\scriptscriptstyle p}$-$\hspace{0.02cm}$Godement theorems (where $p = 1,2$), which provide verifiable necessary and sufficient conditions for a kernel defined on a symmetric space of non-compact type to be positive-definite. A celebrated theorem, sometimes called the Bochner-Godement theorem, already gives such conditions and is far more general in its scope but is especially hard to apply. Beyond the connection with the Gaussian kernel, the new results in this work lay out a blueprint for the study of invariant kernels on symmetric spaces, bringing forth specific harmonic analysis tools that suggest many future applications. 
\end{abstract}

% REQUIRED
\begin{keywords}
 Riemannian manifold, symmetric space, harmonic analysis, Bochner's theorem, Godement theorem, positive-definite kernel
\end{keywords}

% REQUIRED
\begin{MSCcodes}
43A35, 43A85, 43A90, 46E22, 53C35, 53Z50 
\end{MSCcodes}

\section{Introduction}
Positive-definite functions play a fundamental role in harmonic analysis and the theory of group representations, as well as in probability and statistical inference \cite{helgason2,kirillov,yaglom1,yaglom2}. In machine learning, these functions are of paramount importance in the context of kernel-based methods. Indeed, kernels are most often required to be positive-definite for Reproducing Kernel Hilbert Space (RKHS) methods and associated linear geometry and classification algorithms to apply~\cite{kernels1,kernels2}.  
While many positive-definite kernels are known in Euclidean spaces, this is not yet true for non-Euclidean metric spaces, such as non-trivial Riemannian manifolds. Even when such kernels are known, they are far from easy to work with (in the first place, to evaluate). For example, just one evaluation of the heat kernel on a Riemannian symmetric space may require an elaborate Monte Carlo scheme~\cite{iskander1,iskander2}. 

This being the case, one hopes to develop a tractable means of constructing closed-form positive-definite kernels on Riemannian manifolds or at least to develop an explicit criterion to verify whether a given closed-form expression yields a positive-definite kernel or not. This should apply, in particular, to the so-called distance kernels, which are of the form $k(x,y) = g(d(x,y))$, with $g$ a suitable function and $d(x,y)$ the Riemannian distance. 

The results developed in this work serve exactly this purpose. They are motivated by the study of the Gaussian kernel
\begin{equation} \label{eq:gausskernel}
  k(x,y) = \exp\left(-\lambda\hspace{0.02cm}d^{\hspace{0.03cm}\scriptscriptstyle 2}(x,y)\right) \hspace{0.5cm} \text{where $\lambda > 0$}.
\end{equation}
In particular, they aim to show that this kernel is never positive-definite (meaning it is not positive-definite for any value of $\lambda$) when defined on a non-Euclidean Riemannian symmetric space $M$. In general, when $M$ is a non-Euclidean Riemannian manifold, it is known that there always exists some $\lambda$ such that the Gaussian kernel is not positive-definite~\cite{Jayasumana2015,Feragen_2015_CVPR,feragen_open}. However, whether there exists any subset of values of the parameter $\lambda$ for which the Gaussian kernel is positive-definite is an open problem of theoretical and practical significance as noted in the literature~\cite{Minh2018}. For instance,~\cite{sra_positive_2013} shows that the Gaussian kernel may be positive-definite for some non-trivial collection of parameters in certain non-Euclidean geometries.
Recently, it has been shown that if $M$ is compact and not simply connected (\textit{e.g.} a torus or a real projective space), then the Gaussian kernel is never positive-definite~\cite{nonsimply}. While these general results rely on somewhat complicated geometric arguments, they will be considerably extended here, in the case of symmetric spaces, thanks to the introduction of new analytical tools. 

A typical result of choice in this circle of ideas is Bochner's theorem, one of the most famous theorems in harmonic analysis~\cite{folland}. It states that
a function defined on a locally compact Abelian group is positive-definite if and only if it is the inverse Fourier transform of some positive measure. For example, this theorem guarantees the existence of a spectral power measure for any wide-sense stationary signal (Wiener-Khinchin theorem).

The generalization of Bochner's theorem to symmetric spaces is due to Godement~\cite{godement}. It will be referred to as Godement's theorem in this work, although it is sometimes called the Bochner-Godement theorem elsewhere. 
This theorem is recalled in Section \ref{sec:godement}, where we explain why it can be challenging to apply in practice.
Focusing on non-compact symmetric spaces, the L$^{\!\scriptscriptstyle 2}$-$\hspace{0.02cm}$Godement theorem (Section \ref{sec:l2godement}) and the L$^{\!\scriptscriptstyle 1}$-$\hspace{0.02cm}$Godement theorem (Appendix \ref{sec:l1godement}), both state that in the presence of additional integrability assumptions, the difficulties inherent in its application are greatly alleviated: it turns out that a function is positive-definite if and only if its spherical transform is positive and integrable (the definition of the spherical transform is recalled in Section \ref{sec:l2godement}). 

Basic examples of applications of the L$^{\!\scriptscriptstyle 2}$-$\hspace{0.02cm}$Godement theorem are provided in Section \ref{sec:basic_eg}. Sections \ref{sec:c_ss} and \ref{sec:nc_ss} then return to the Gaussian kernel (\ref{eq:gausskernel}). Section \ref{sec:c_ss} proves that the Gaussian kernel is never positive-definite on a compact symmetric space, expanding on \cite{nathael} and \cite{nonsimply}, which prove the corresponding result on the circle and non-simply connected closed Riemannian manifolds, respectively.
Section \ref{sec:nc_ss} takes up the case of non-compact symmetric spaces (under the general assumptions of Sections \ref{sec:godement} and \ref{sec:l2godement}, compact or non-compact have the same meaning as  ``\,of compact or non-compact type\," --- no Euclidean spaces are allowed).

Section \ref{sec:nc_ss} provides a rigorous proof that deals with the case of non-compact symmetric spaces in all but a few limited scenarios in low dimensions, which are investigated computationally instead. First, in the case of the hyperbolic plane, a numerical evaluation shows that the positivity condition in the L$^{\!\scriptscriptstyle 2}$-$\hspace{0.02cm}$Godement theorem does not hold, for any $\lambda$ within a certain continuous range, starting near $\lambda = 0$. However (Lemma \ref{lem:embedh2}), any non-compact symmetric space contains an isometrically embedded hyperbolic plane, and this implies the Gaussian kernel fails to be positive-definite on any non-compact symmetric space, for that same range of values of $\lambda$. 
Section \ref{sec:nc_ss} also proves (with no recourse to numerical work) that the Gaussian kernel is never positive-definite when defined on a three-dimensional hyperbolic space. Again, this extends to any symmetric space that contains an isometrically embedded hyperbolic space (\textit{e.g.} the spaces of $n \times n$ real, complex, or quaternion, positive-definite matrices, with the requirement that $n \geq 4$ for the real case). 

Section \ref{app:sech} introduces an alternative to the Gaussian kernel (\ref{eq:gausskernel}) for hyperbolic spaces, giving it the name of hyperbolic secant or Herschel-Maxwell kernel. Once more, this is a distance kernel
\begin{equation} \label{eq:herschelkernel}
 k(x,y) = \left(\cosh(d(x,y)\right)^{-a} \hspace{0.5cm} \text{where $a > 0$}.
\end{equation}
This kernel is derived from first principles as a different generalization of the Gaussian kernel on hyperbolic spaces, similar to the famous Herschel-Maxwell derivation \cite{jaynes}. In contrast with the Gaussian kernel, it is shown to always be positive-definite on the hyperbolic plane and positive-definite on the hyperbolic space for integer values of $a$. 

The L$^{\!\scriptscriptstyle p}$-$\hspace{0.02cm}$Godement theorems (where $p = 1,2$) are most easily applied to distance kernels (for example, to the Gaussian or Herschel-Maxwell kernels). A detailed examination of their application to more general invariant kernels, which are not merely functions of distance, is one of the main tasks to be carried out in ongoing and future work (see in particular~\cite{complexkernels}).

The present work focuses on symmetric spaces and does not discuss other, more general classes of Riemannian manifolds. This restriction is rather essential if one wishes to pursue an approach based on harmonic analysis, but it is still a restriction (Stiefel manifolds are relevant to many applications, but are not symmetric spaces in general~\cite{vemuristiefel}). Fortunately,  symmetric spaces include various spaces of covariance matrices, in addition to Grassmannnians, compact Lie groups, and other
kinds of manifolds, which arise across brain-computer interface analysis, radar signal processing, computer vision, quantum information, \textit{etc.}~\cite{app1,app2,app3,app4}. Moreover, symmetric spaces are also quite useful for graph-embedding applications~\cite{embeddingspaper}. Therefore, the ongoing efforts directed at constructing positive-definite (as~in \cite{iskander1,iskander2,matern,complexkernels}) or positively decomposable (\cite{nathael1}) kernels on symmetric spaces should lead to the emergence of new kernel-based methods (for classification, hypothesis testing, \textit{etc.}) applicable to the above-mentioned fields.

The present work deals solely with kernels and does not make any mention of Gaussian processes (Gaussian processes are considered in detail in~\cite{iskander1,iskander2,matern}). This should come as no surprise since the present approach is purely analytical rather than probabilistic. However, this is not a limitation of the present work as positive-definite kernels and Gaussian processes are equivalent. Given a Gaussian process, its covariance is a positive-definite kernel. The reverse construction can be obtained by Kolmogorov's extension theorem \cite{oksendal_stochastic_2003}. Let us finally note that the connection between invariant Gaussian processes and harmonic analysis is quite strong,\linebreak since these processes are typically obtained by solving invariant stochastic partial differential equations~\cite{matern,salempde}.

There is an additional direction where methods related to Godement's theorem can be applied. The general setting of this theorem is that of Gelfand pairs, which goes beyond symmetric spaces. In particular, Gelfand pairs include certain discrete spaces, such as spaces of phylogenetic trees, very important in bio-informatics, linguistics, \textit{etc.}~\cite{Gpairs}. A tentative direction for future work is to use Godement's theorem to improve understanding of kernel methods for phylogenetic trees.
\section{Godement's theorem} \label{sec:godement}
Let $M$ be a set and let $k:M\times M\rightarrow \mathbb{C}$. Assume that $k$ satisfies
\begin{equation} \label{eq:kpd}
  \sum^N_{i,j=1} k(x_i,x_j)\,c_i\hspace{0.02cm}\bar{c}_j \geq 0
\end{equation}
for all $x_i \in M$, $c_i \in \mathbb{C}$ and all $N\in \mathbb N$, where the bar denotes complex conjugation. 
Then, $k$ is said to be a positive-definite kernel on $M$. Without loss of generality, it is enough to consider Hermitian kernels, that is, $k(x,y)$ is the conjugate of $k(y,x)$. In addition, assume that a group $G$ acts transitively on $M$; this action is denoted by $g\cdot x$, for $g \in G$ and $x \in M$. If $k$ satisfies 
\begin{equation} \label{eq:kinv}
k(g\cdot x,g\cdot y) = k(x,y), 
\end{equation} 
for all $g \in G$ and $x,y \in M$, then $k$ is said to be a $G$-invariant (or just an invariant) positive-definite kernel. Let $o \in M$ and denote by $H$ the stabiliser of $o$ in $G$. A function $f:M \rightarrow \mathbb{C}$ is called $H$-invariant if $f(h\cdot x) = f(x)$ for all $h \in H$ and $x \in M$. Using such a function, we can define $k_f:M \times M \rightarrow \mathbb{C}$,
\begin{equation} \label{eq:ftoK}
  k_f(x,y) = f(g^{-1}_{\hspace{0.02cm} \scriptscriptstyle 1}g^{\phantom{-1}}_{\hspace{0.02cm} \scriptscriptstyle 2}\!\!\!\!\!\cdot o) \text{ for $x = g^{\phantom{-1}}_{\hspace{0.02cm} \scriptscriptstyle 1}\!\!\!\!\!\cdot o$ and $y = g^{\phantom{-1}}_{\hspace{0.02cm} \scriptscriptstyle 2}\!\!\!\!\!\cdot o$}.
\end{equation}
Indeed, $H$-invariance of $f$ implies the right-hand side of (\ref{eq:ftoK}) depends only on $x$ and $y$ (and~not on the choice of $g^{\phantom{-1}}_{\hspace{0.02cm} \scriptscriptstyle 1}\!\!\!\!\!$ and $g^{\phantom{-1}}_{\hspace{0.02cm} \scriptscriptstyle 2}\!\!\!$). Now, $k_f$ is $G$-invariant (it satisfies (\ref{eq:kinv})). If $k_f$ happens to be positive-definite (to satisfy (\ref{eq:kpd})), then $f$ is said to be a positive-definite function. 

All invariant positive-definite kernels can be represented in this way, since (\ref{eq:kpd}) and (\ref{eq:kinv}) imply that $k = k_f$, where $f(x) = k(o,x)$ and $f$ is $H$-invariant and positive-definite. In particular, if $k = k_f$, then $k$ is positive-definite if and only if $f$ is positive-definite.

Note that the setup that we just presented implies that the set $M$ is necessarily isomorphic to the homogeneous manifold $G/H $, where $G$ is the group that acts transitively on $M$ and $H$ the isotropy group of some element $o \in M$. Conversely, given any group $G$ and an arbitrary subgroup $H$, we can cast the homogeneous space $G/H $ in this framework by setting $M=G/H $, $o=eH $, and considering the transitive left action of $G$ on $G/H $. Given these observations, in the sequel, $M$ will be a Riemannian symmetric space and $(G,H)$ a symmetric pair. $G$ is assumed to be semisimple, of either compact or non-compact type, and with finite center~\cite{helgason1}. Godement's theorem generalizes Bochner's classical theorem to this setting~\cite{godement}. Roughly speaking, it states that an $H$-invariant continuous function $f$ is positive-definite if and only if it is a positive combination of positive-definite spherical functions. In~\cite{bingham}, Godement's theorem is called the Bochner-Godement theorem. 

Recall that a spherical function $\varphi:M\rightarrow \mathbb{C}$ is an $H$-invariant function, such that $\varphi(o) = 1$ and $\varphi$ is moreover a joint eigenfunction of all the $G$-invariant differential operators on $M$~\cite{helgason2}. The set $\Phi$ of positive-definite spherical functions on $M$ is locally compact for the topology of uniform convergence on compact subsets of $M$~\cite{godement}. Godement's theorem states that an $H$-invariant continuous $f$ is positive-definite if and only if 
\begin{equation} \label{eq:godement}
 f(x) = \int_\Phi\,\varphi(x)\mu_f(d\varphi)
\end{equation}
where $\mu_f$ is a unique finite positive measure on $\Phi$.

It is complicated to use Godement's theorem to verify whether a given $f$ is positive-definite. When $M$ is non-compact, the set $\Phi$ may be challenging to determine~\cite{helgason2} (Page 484), and there is no straightforward way of determining $\mu_f$ from $f$.  The following section identifies a special case of Godement's theorem where these issues do not arise. 

\section{L$^\textbf{\!2}$-Godement theorem} \label{sec:l2godement}
The idea is to combine Godement's theorem with the so-called spherical transform. Roughly speaking, the spherical transform can be seen as a radial Fourier transform for $H$-invariant functions~\cite{terras1,terras2}. When $M = \mathbb{R}^2$, it boils down to a zero-order Hankel transform. 

Assume that $M$ is a non-compact symmetric space. Pick an Iwasawa decomposition of the Lie algebra $\mathfrak{g}$ of $G$, say $\mathfrak{g} = \mathfrak{n} + \mathfrak{a} + \mathfrak{h}$ (where $\mathfrak{n}$ is nilpotent, $\mathfrak{a}$ is Abelian, and $\mathfrak{h}$ is the Lie algebra of $H$~\cite{helgason1}). For each $g \in G$, $g = n(g)\exp(a(g))\hspace{0.03cm}h(g)$, where $n(g) \in N$, $a(g) \in \mathfrak{a}$ and $h(g) \in H$ are unique ($N$ is the Lie subgroup of $G$ with Lie algebra $\mathfrak{n}$). 

For each $t \in \mathfrak{a}^*$ (the dual of $\mathfrak{a}$ as a real vector space), there is a spherical function $\varphi^{}_t:M \rightarrow \mathbb{C}$, given by the following integral formula~\cite{helgason2},
\begin{equation} \label{eq:spherical_nc}
  \varphi^{}_t(g\cdot o) = \int_H \exp\left[(\mathrm{i}t+\rho)(a(hg))\right]dh \hspace{1cm} \text{for $g \in G$}.
\end{equation}
Here, $\rho$ is half the sum of the positive roots of $\mathfrak{g}$ with respect to $\mathfrak{a}$, and $dh$ denotes the normalized Haar measure on $H$ (a gentle introduction to these concepts is provided in~\cite{terras1,terras2}). Define the spherical transform of an $H$-invariant $f:M\rightarrow \mathbb{C}$ as $\hat{f}:\mathfrak{a}^* \rightarrow \mathbb{C}$, 
\begin{equation} \label{eq:sphericaltransform}
  \hat{f}(t) = \int_M f(x)\varphi^{}_{-t}(x)\hspace{0.02cm}\mathrm{vol}(dx)
\end{equation}
where $\mathrm{vol}$ denotes the volume measure of $M$. This is well-defined when $f$ is continuous and compactly supported, in which case it admits an inversion formula, 
\begin{equation} \label{eq:inverse_sphericaltransform}
  f(x) = c_{\scriptscriptstyle M}\int_{\mathfrak{a}^*} \hat{f}(t)\varphi^{}_t(x)\hspace{0.02cm}|c(t)|^{-2}dt
\end{equation}
where $c(t)$ is a certain function, called the Harish-Chandra function (for an overview, see~\cite{jewel}), $dt$ denotes the Lebesgue measure,
and $c_{\scriptscriptstyle M}$ is a constant that is independent of the function $f$. 

Defined as above, the spherical transform extends to the space $L^2_H(M)$ of $H$-invariant square-integrable functions (with respect to $\mathrm{vol}$) and satisfies the Plancherel formula
\begin{equation} \label{eq:plancherel}
  \int_M |f(x)|^2\hspace{0.03cm}\mathrm{vol}(dx) = c_{\scriptscriptstyle M}\int_{\mathfrak{a}^*} |\hat{f}(t)|^2\hspace{0.03cm}|c(t)|^{-2}dt
\end{equation}
for any $f \in L^2_H(M)$. Here, the following proposition will be called the L$^{\!\scriptscriptstyle 2}$-$\hspace{0.02cm}$Godement theorem. It is a special case of Godement's theorem, which holds for square-integrable functions.

\begin{theorem}[L$^{\!\scriptscriptstyle 2}$-$\hspace{0.02cm}$Godement Theorem] 
\label{prop:l2god}
  A continuous function $f \in L^2_H(M)$ is positive-definite if and only if  $\hat{f}(t) \geq 0$ for almost all $t \in \mathfrak{a}^*$ and $\hat{f}$ is integrable with respect to the measure $|c(t)|^{-2}dt$. 
\end{theorem}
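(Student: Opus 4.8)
The plan is to prove both implications with the spherical-transform machinery recalled above, using Godement's theorem only for its elementary direction and as structural input about positive-definite spherical functions. Throughout, $f\in L^2_H(M)$, so by Plancherel (\ref{eq:plancherel}) its spherical transform satisfies $\hat f\in L^2(\mathfrak a^*,|c(t)|^{-2}dt)$ from the outset. For \emph{sufficiency}, suppose $\hat f\ge 0$ almost everywhere and $\hat f\in L^1(\mathfrak a^*,|c(t)|^{-2}dt)$, so that $\hat f\in L^1\cap L^2$ for the measure $|c(t)|^{-2}dt$. I would first record that for real $t\in\mathfrak a^*$ the spherical function $\varphi_t$ of (\ref{eq:spherical_nc}) is positive-definite, being the $H$-bi-invariant matrix coefficient of the unitary spherical principal series representation attached to $t$, and that $|\varphi_t(x)|\le\varphi_t(o)=1$. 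Hence $\tilde f(x)=c_{\scriptscriptstyle M}\int_{\mathfrak a^*}\hat f(t)\varphi_t(x)|c(t)|^{-2}dt$ converges absolutely and is continuous, and since $\hat f\ge 0$ it is a superposition of positive-definite functions against the finite positive measure $c_{\scriptscriptstyle M}\hat f(t)|c(t)|^{-2}dt$, hence positive-definite (this is the elementary direction of Godement's theorem, (\ref{eq:godement})). It then remains to check $\tilde f=f$: as $\hat f\in L^1\cap L^2$, this absolutely convergent integral coincides with the $L^2$-inverse spherical transform of $\hat f$, so $\tilde f\in L^2_H(M)$ with spherical transform equal to $\hat f$; since $f$ has the same transform, injectivity of the spherical transform on $L^2_H(M)$ (a consequence of (\ref{eq:plancherel})) forces $\tilde f=f$ almost everywhere, hence everywhere by continuity.

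For \emph{necessity}, assume $f$ is continuous, positive-definite, and in $L^2_H(M)$, and establish the two conclusions separately. Nonnegativity $\hat f\ge 0$ almost everywhere: positive-definiteness of $k_f$, written in integral form against a continuous compactly supported density and transported to $G/H$, is equivalent to $\int_M (f*g)(x)\,\overline{g(x)}\,\mathrm{vol}(dx)\ge 0$ for every continuous compactly supported $H$-invariant $g$, where $*$ is the commutative convolution of the Gelfand pair $(G,H)$. The convolution theorem $\widehat{f*g}=\hat f\,\hat g$ together with the polarized Plancherel formula turns the left-hand side into $c_{\scriptscriptstyle M}\int_{\mathfrak a^*}\hat f(t)\,|\hat g(t)|^2\,|c(t)|^{-2}dt$. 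As $g$ ranges over test functions, the weights $|\hat g|^2$ are rich enough to localise on $\mathfrak a^*$, so nonnegativity of all these integrals forces $\hat f(t)\ge 0$ for almost every $t$.

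Integrability is the delicate point, since Plancherel yields only $L^2$, not $L^1$, for the measure $|c(t)|^{-2}dt$. I would construct a bi-$H$-invariant approximate identity $(u_\varepsilon)$ of continuous, compactly supported, positive-definite functions with $\hat u_\varepsilon\ge 0$ and $\hat u_\varepsilon\nearrow 1$ pointwise. Applying the inversion formula (\ref{eq:inverse_sphericaltransform}) to $f*u_\varepsilon*u_\varepsilon$ and evaluating at $o$ gives $(f*u_\varepsilon*u_\varepsilon)(o)=c_{\scriptscriptstyle M}\int_{\mathfrak a^*}\hat f(t)\,\hat u_\varepsilon(t)^2\,|c(t)|^{-2}dt$. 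Continuity of $f$ and the approximate-identity property make the left-hand side converge to $f(o)$, hence it stays bounded; meanwhile $\hat u_\varepsilon^2\nearrow 1$, and because $\hat f\ge 0$ (already proven) monotone convergence passes to the limit on the right, yielding $c_{\scriptscriptstyle M}\int_{\mathfrak a^*}\hat f(t)\,|c(t)|^{-2}dt=f(o)<\infty$. This both proves $\hat f\in L^1(|c(t)|^{-2}dt)$ and pins down its total mass.

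The hard part will be this integrability step: one must build the approximate identity \emph{inside} the class of positive-definite, compactly supported, bi-$H$-invariant functions, so that $\hat u_\varepsilon\ge 0$ and the monotone structure is genuinely available, and one must justify that the inversion formula (\ref{eq:inverse_sphericaltransform}), stated only for continuous compactly supported functions, applies to $f*u_\varepsilon*u_\varepsilon$ (or to a suitable regularisation). A more structural subtlety, implicit in invoking Godement only for its elementary direction, is that for non-compact $M$ the set $\Phi$ also contains complementary-series spherical functions with complex parameter; the $L^2$ hypothesis is precisely what guarantees that none of these enter, i.e.\ that the Godement measure $\mu_f$ is carried by the principal series $\{\varphi_t:t\in\mathfrak a^*\}$ and is absolutely continuous with density $c_{\scriptscriptstyle M}\hat f(t)|c(t)|^{-2}$. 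The richness of $\{|\hat g|^2\}$ and the various limit exchanges are the routine verifications I would not grind through here.
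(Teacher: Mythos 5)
Your sufficiency argument is essentially the paper's: inversion of an integrable, a.e.\ nonnegative $\hat f$ exhibits $f$ as an integral of the positive-definite $\varphi_t$ against the finite positive measure $c_{\scriptscriptstyle M}\hat f(t)|c(t)|^{-2}dt$, and the elementary direction of Godement's theorem (plus injectivity of the $L^2$ spherical transform to identify the result with $f$) finishes it. For necessity you diverge from the paper. The paper also starts from $\int_M(\psi*\psi^{\scriptscriptstyle\dagger})f\,\mathrm{vol}\ge 0$ and the identity $\langle\psi,\psi\rangle_f=\int|\hat\psi|^2\hat f\,\alpha(dt)$, but then it does \emph{not} argue by localisation; it invokes Godement's Plancherel theorem to conclude that the (already known to exist, finite, positive) Bochner--Godement measure $\mu_f$ is supported on $\Phi_*$ and equals $\hat f\alpha$ there, so nonnegativity \emph{and} integrability of $\hat f$ fall out simultaneously from finiteness and positivity of $\mu_f$. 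Your plan deliberately avoids the deep direction of Godement's theorem and therefore has to earn integrability separately; that is a legitimate and arguably more self-contained route (closer in spirit to the paper's alternative proof via $f=\phi*\phi^{\scriptscriptstyle\dagger}$ with $\phi\in L^2_H(M)$, which gives $\hat f=|\hat\phi|^2\in L^1$ directly).

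The one step that does not work as written is the monotone-convergence argument. If $u_\varepsilon$ is continuous and compactly supported, the Paley--Wiener theorem forces $\hat u_\varepsilon$ to be (the restriction of) an entire function of exponential type; such a family cannot satisfy $\hat u_\varepsilon\nearrow 1$ monotonically, so the monotone structure you rely on is not available inside the class you insist on. The repair is standard: take $u_\varepsilon=v_\varepsilon*v_\varepsilon^{\scriptscriptstyle\dagger}$ with $v_\varepsilon\in C^{\hspace{0.03cm}c}_H(M)$ an approximate identity, so $\hat u_\varepsilon=|\hat v_\varepsilon|^2\ge 0$ and $\hat u_\varepsilon\to 1$ only pointwise, and replace monotone convergence by Fatou's lemma, which together with $\hat f\ge 0$ gives $c_{\scriptscriptstyle M}\int_{\mathfrak a^*}\hat f(t)|c(t)|^{-2}dt\le\liminf\,(f*u_\varepsilon*u_\varepsilon)(o)=f(o)<\infty$; dominated convergence then recovers the exact mass afterwards if you want it. (Applying the inversion formula to $f*u_\varepsilon*u_\varepsilon$ is fine: its transform $\hat f\,\hat u_\varepsilon^{\,2}$ is in $L^1$ by Cauchy--Schwarz since $\hat f\in L^2$ and $\hat u_\varepsilon^{\,2}\in L^2$.) With that substitution, and granting the routine Paley--Wiener density argument behind the ``richness of $|\hat g|^2$'' step, your proof goes through.
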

The connection with Godement's theorem, from the previous Section \ref{sec:godement}, comes from the fact that all of the functions $\varphi^{}_t$ in (\ref{eq:spherical_nc}) are positive-definite~\cite{helgason2} (Page 484). The set of $\varphi^{}_t$ where $t \in \mathfrak{a}^*$ is therefore a subset of $\Phi$ (in fact, it is a proper subset). The proof of Theorem \ref{prop:l2god} is given in Section \ref{sec:prop_l2god}, where a second alternative proof is also discussed.

The following section spells out the spherical transform pair (\ref{eq:sphericaltransform})$-$(\ref{eq:inverse_sphericaltransform}) in two basic cases, where $M$ is a hyperbolic plane or a hyperbolic space of dimension equal to three, and provides two examples of applications of Theorem \ref{prop:l2god}.  

\section{Basic examples} \label{sec:basic_eg}
\subsection{Hyperbolic plane} $M$ is a two-dimensional simply connected space with constant negative curvature equal to $-1$. One may take $G = SL(2,\mathbb{R})$ and $H = SO(2)$ (in the usual notation, $SL(2,\mathbb{R}) = 2 \times 2$ real unit-determinant matrices, $SO(2) = 2 \times 2$ rotation matrices). A detailed exposition may be found in~\cite{terras1}. For any $H$-invariant function $f:M\rightarrow \mathbb{C}$ there exists $g:\mathbb{R}_+ \rightarrow \mathbb{C}$, with $f(x) = g(r)$ where $r = d(o,x)$, the distance in $M$ between $x$ and the ``\,point of origin\," $o$ (the choice of this point is arbitrary). In particular, the spherical functions (\ref{eq:spherical_nc}) are given in terms of Legendre functions~\cite{helgason2,terras1},
\begin{equation} \label{eq:conicals}
  \varphi^{}_t(x) = P_{-\frac{1}{2}+\mathrm{i}t}(\cosh(r))
\end{equation}
for $t \in \mathbb{R}$ (with regard to (\ref{eq:spherical_nc}), $\mathfrak{a}$ is one-dimensional, and therefore $\mathfrak{a} \simeq \mathfrak{a}^* \simeq \mathbb{R}$. Accordingly, $t \in \mathfrak{a}^*$ becomes $t \in \mathbb{R}$), and it should be noted that $\varphi^{}_t = \varphi^{}_{-t}$ (see~\cite{terras1} (Page 141)). 

The spherical transform pair (\ref{eq:sphericaltransform})$-$(\ref{eq:inverse_sphericaltransform}) can be expressed as follows (see~\cite{terras1} (Page 149))
\begin{align} 
\label{eq:h2_sphericaltransform} & \hat{f}(t) = 2\pi\int^\infty_0 g(r)\hspace{0.02cm}P_{-\frac{1}{2}+\mathrm{i}t}(\cosh(r))\hspace{0.02cm}\sinh(r)dr, \\[0.2cm]
\label{eq:h2_inverse_sphericaltransform} & g(r) = \frac{1}{2\pi}\int^\infty_0 \hat{f}(t)\hspace{0.02cm}P_{-\frac{1}{2}+\mathrm{i}t}(\cosh(r))\hspace{0.02cm}\alpha(t)dt,
\end{align}
where $f(x) = g(r)$ as above, and $\alpha(t) = t\tanh(\pi t)$. As pointed out in~\cite{terras1}, (\ref{eq:h2_sphericaltransform})$-$(\ref{eq:h2_inverse_sphericaltransform}) is known as a Mehler-Fock transform pair in the mathematical physics literature. Note that (\ref{eq:h2_sphericaltransform}) follows directly from (\ref{eq:sphericaltransform}) by using (\ref{eq:conicals}) and then expressing the volume integral in geodesic spherical coordinates, which leads to the appearance of $\sinh(r)$ under the integral (roughly, $\mathrm{vol}(dx) = \sinh(r)\hspace{0.02cm}dr\hspace{0.02cm}d\theta$ where $\theta$ is a polar angle, which yields the pre-factor $2\pi$). 

\subsection{Hyperbolic space} $M$ is now of dimension $3$ instead of $2$.  One may take $G = SL(2,\mathbb{C})$ and $H = SU(2)$ (in other words, $SL(2,\mathbb{C}) = 2 \times 2$ complex unit determinant matrices, and $SU(2) = $ unitary matrices in $SL(2,\mathbb{C})$). As before, any $H$-invariant function $f$ is of the form $f(x) = g(r)$. The spherical functions (\ref{eq:spherical_nc}) are given by~\cite{helgason2} (Page 432)
\begin{equation} \label{eq:sinsinh}
  \varphi^{}_t(x) = \frac{\sin(tr)}{t\sinh(r)}.
\end{equation}
Again, $\varphi^{}_t = \varphi^{}_{-t}\,$. The spherical transform pair (\ref{eq:sphericaltransform})$-$(\ref{eq:inverse_sphericaltransform}) is essentially a sine transform pair,
\begin{align} 
\label{eq:h3_sphericaltransform} & \hat{f}(t) = \frac{4\pi}{t}\int^\infty_0 g(r)\sinh(r)\hspace{0.02cm}\sin(tr)dr, \\[0.2cm]
\label{eq:h3_inverse_sphericaltransform} & g(r)\sinh(r) = \frac{1}{2\pi^2}\int^\infty_0 t\hat{f}(t)\hspace{0.02cm}\sin(tr)\hspace{0.02cm}dt,
\end{align}
as can be seen using (\ref{eq:sinsinh}) and computing the integral (\ref{eq:sphericaltransform}) in geodesic spherical coordinates. 

\subsection{Distance kernels} For the hyperbolic plane and space that we studied in the previous examples, any $H$-invariant function $f$ is of the form $f(x) = g(d(o,x))$. It follows from (\ref{eq:ftoK}) that $k_f(x,y) = g(d(x,y))$. But, as stated in Section \ref{sec:godement}, any invariant kernel $k$ is of the form $k = k_f\hspace{0.03cm}$ and it is hence a distance kernel
\begin{equation} \label{eq:distK}
  k(x,y) = g(d(x,y)),
\end{equation}
for some $g:\mathbb{R}_+ \rightarrow \mathbb{C}$. The positive-definiteness of $k$ implies that $g$ is necessarily real-valued. In other words, any positive-definite kernel is a real-valued distance kernel. This statement holds whenever $M$ is a rank-one symmetric space. This class of symmetric spaces includes spheres, hyperbolic spaces, projective and hyperbolic projective spaces of any dimension~\cite{helgason1}.

Intuitively, the rank of a symmetric space $M$ is the (minimum) number of geometric invariants required to determine a pair of points $(x,y)$ up to isometry. For a rank-one space, there is exactly one such invariant, the distance $d(x,y)$. Thus, if $k$ is an invariant kernel, $k(x,y)$ depends only on $d(x,y)$. 

\subsubsection{Wishart kernel} Let $M$ be the hyperbolic plane, and let $g(r) = \exp(-2a\cosh(r))$, where $a > 0$, in (\ref{eq:distK}). It is well known that $M$ can be identified with the space of $2\times 2$ unit-determinant positive-definite matrices~\cite{terras1}. Thinking of $x \in M$ as a matrix of this kind, and choosing $o$ the identity matrix,
$d(o,x) = r$ implies the trace of $x$ is $\mathrm{tr}(x) = 2\cosh(r)$~\cite{terras1} (Page 149. Exercice 20). Thus, by (\ref{eq:distK})
$$
k(o,x) = \exp\left(-a\hspace{0.02cm}\mathrm{tr}(x)\right).
$$
Recalling the action $g\cdot x = g\hspace{0.02cm}x\hspace{0.02cm}g^{\dagger}$ (a matrix product, with $\dagger$ the transpose~\cite{terras1}) and $o =$ identity,\hfill\linebreak and setting $x = g^{\phantom{-1}}_{\hspace{0.02cm} \scriptscriptstyle 1}\!\!\!\!\!\cdot o$ and $y = g^{\phantom{-1}}_{\hspace{0.02cm} \scriptscriptstyle 2}\!\!\!\!\!\cdot o$, it follows from (\ref{eq:kinv}) that
$$
k(x,y) = k(o,g^{-1}_{\hspace{0.02cm} \scriptscriptstyle 1}g^{\phantom{-1}}_{\hspace{0.02cm} \scriptscriptstyle 2}\!\!\!\!\!\cdot o) = \exp(-a\hspace{0.02cm}\mathrm{tr}(x^{-1}y)).
$$ 
This is the unnormalized density of a Wishart distribution, and $k$ is the Wishart kernel. Now, the L$^{\!\scriptscriptstyle 2}$-$\hspace{0.02cm}$Godement theorem can be used to show that this kernel is not positive-definite, for any value of $a$. The spherical transform $\hat{f}(t)$ is given in~\cite{terras1} (Page 151),
\begin{equation} \label{eq:wishartsphere}
  g(r) = \exp(-2a\cosh(r)) \text{ in (\ref{eq:h2_sphericaltransform})}\,\Longrightarrow\, \hat{f}(t) = \left(\frac{4\pi}{a}\right)^{\!\!\frac{1}{2}}K_{\mathrm{i}t}(2a)
\end{equation}
where $K_{\mathrm{i}t}$ is a modified Bessel function of the second kind with imaginary order. For fixed $a$, $K_{\mathrm{i}t}(2a)$ has infinitely many simple zeros in $t$~\cite{paris}. In particular, $\hat{f}(t)$ does not satisfy the positivity condition in the L$^{\!\scriptscriptstyle 2}$-$\hspace{0.02cm}$Godement theorem (Theorem \ref{prop:l2god}), so $k$ is not positive-definite (the theorem can be applied because $f(x) = k(o,x)$ is continuous and in $L^2_H(M)$ for any $a$).

Furthermore, since the Wishart kernel is never positive-definite on the hyperbolic plane, and since a hyperbolic space contains infinitely many isometric copies of the hyperbolic plane, the Wishart kernel is never positive-definite on the hyperbolic space. This statement can be made rigorous using the embedding lemma below (Lemma \ref{lem:embedding}).

\subsubsection{Hyperbolic secant kernel} \label{subsubsec:hsecant} Let $M$ be the hyperbolic plane and $g(r) = (\cosh(r))^{-a}$ where $a > 1/2$. Then, $f(x) = g(d(o,x))$ is clearly continuous, and $f \in L^2_H(M)$, because
$$
\int_M|f(x)|^2\hspace{0.02cm}\mathrm{vol}(dx) = 2\pi\int^\infty_0 (\cosh(r))^{-2a}\hspace{0.02cm}\sinh(r)dr = 2\pi\int^\infty_1\,u^{-2a}du \,< \infty
$$
as follows by integrating in geodesic spherical coordinates. From~\cite{terras1} (Page 151)
\begin{equation} \label{eq:sechsphere}
  g(r) = (\cosh(r))^{-a} \,\text{ in (\ref{eq:h2_sphericaltransform})}\,\Longrightarrow\, \hat{f}(t) = 2^{a-1}\frac{\sqrt{\pi}}{\Gamma(a)}\left| \Gamma((a-1/2+\mathrm{i}t)/2)\right|^2
\end{equation}
which is positive for all $t$ ($\Gamma$ is the Gamma function). Now, by the L$^{\!\scriptscriptstyle 2}$-$\hspace{0.02cm}$Godement theorem, the hyperbolic secant kernel
$$
k(x,y) = \left(\cosh(d(x,y))\right)^{-a}
$$
is positive-definite on the hyperbolic plane for any $a > 1/2$. This example of a positive-definite kernel on the hyperbolic plane does not seem to be known in the literature. We believe it is reasonable to name it the hyperbolic secant kernel. 

The case where $M$ is the hyperbolic space, rather than the hyperbolic plane, is investigated in Appendix \ref{app: hyperbolic secant 3D}.
Positive-definiteness seems to be, once more, the general rule. 

\section{Compact symmetric spaces} \label{sec:c_ss}
This section considers the case in which $M$ is a compact symmetric space (for example, a~sphere, a projective space, or a Grassmannian). The following proposition will now be proved.
\begin{proposition} 
\label{prop:compact_gausskernel}
   Let $M$ be a compact symmetric space. The Gaussian kernel (\ref{eq:gausskernel}) is not positive-definite on $M$, for any value of $\lambda$.
\end{proposition}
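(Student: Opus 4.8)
The plan is to exploit the fact that positive-definiteness is inherited by restriction: if $k$ is positive-definite on $M$ and $S\subseteq M$ is any subset, then $k|_{S\times S}$ is positive-definite on $S$. I would take $S$ to be the image of a closed geodesic $\gamma$ of $M$, parametrized by arc length with period $L$ equal to its length (such a geodesic exists since $M$ is compact). Because $M$ is a symmetric space, translation along $\gamma$ by transvections is an isometry, so $d(\gamma(s),\gamma(s'))=\delta(s-s')$ for a single even, $L$-periodic function $\delta$ with $\delta(u)=|u|$ for $|u|$ small (the geodesic minimizes near the diagonal). The restricted kernel is then $\tilde k(s,s')=\exp(-\lambda\,\delta(s-s')^2)$, a translation-invariant, hence positive-definite, kernel on the circle $\mathbb{R}/L\mathbb{Z}$. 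This reduces the problem to the circle case treated in \cite{nathael}.

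On the circle, positive-definiteness is equivalent to all Fourier coefficients $c_n=\tfrac{1}{L}\int_0^L \exp(-\lambda\,\delta(u)^2)\,e^{-2\pi i n u/L}\,du$ being nonnegative. I would estimate these for large $n$ by integration by parts. Near $u=0$ the integrand is smooth and even (there $\delta(u)^2=u^2$), so that point contributes only rapidly decaying terms; the decisive contribution comes from the point $u_\ast$ where $\delta$ attains its maximum (the cut point of $o$ along $\gamma$), where $\delta$ has a corner with nonzero one-sided derivatives. Two integrations by parts then yield $c_n\sim C\,(-1)^n n^{-2}$ with $C\neq 0$, so infinitely many $c_n$ are strictly negative, contradicting positive-definiteness. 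Since $\lambda>0$ is arbitrary, this shows the Gaussian kernel is never positive-definite.

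The main obstacle is the behavior of $\delta$ at its maximum: the clean $(-1)^n n^{-2}$ asymptotic relies on $\delta$ having a genuine corner there (as for the round circle, where $\delta(u)=\min(|u|,L-|u|)$), which amounts to controlling the cut locus of $o$ along $\gamma$. To sidestep a delicate cut-locus analysis in full generality, I would choose $\gamma$ to lie inside a totally geodesic constant-curvature circle or sphere furnished by an embedding lemma of the type used elsewhere in the paper, on which $\delta(u)=\min(|u|,L-|u|)$ holds exactly and the corner is guaranteed. A parallel but computationally heavier route would dispense with geodesics and instead use the compact analogue of Godement's theorem: expand $f(x)=\exp(-\lambda\,d^2(o,x))$ in the discrete family of zonal spherical functions and show directly that its expansion coefficients are not all nonnegative, where I expect the same non-smoothness of $d^2$ at the cut locus to force the appearance of negative coefficients.
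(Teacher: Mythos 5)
Your overall strategy is the same as the paper's: restrict the Gaussian kernel to a closed geodesic, identify the restriction with a translation-invariant kernel on a circle, and conclude from the circle case (which the paper simply cites from \cite{nathael}, and which your Fourier-coefficient asymptotics essentially re-derive). The genuine gap is exactly the step you flag and then try to sidestep: you need a closed geodesic $\gamma$ for which the \emph{ambient} distance satisfies $d(\gamma(s),\gamma(s'))=\min(|s-s'|,\,L-|s-s'|)$, i.e.\ a circle that is isometrically embedded as a metric space. Transvections do give $d(\gamma(s),\gamma(s'))=\delta(s-s')$ for some even $L$-periodic $\delta$, but for a generic closed geodesic $\delta$ need not have the corner that your integration-by-parts argument requires, and your proposed remedy --- place $\gamma$ inside a totally geodesic constant-curvature circle or sphere --- does not close the gap, because a totally geodesic submanifold is in general not metrically embedded: the ambient distance between two of its points can be strictly smaller than the intrinsic one (for instance, a long closed geodesic winding around a product of two round spheres is totally geodesic but is nothing like a metric circle in the ambient distance). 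So the formula $\delta(u)=\min(|u|,L-|u|)$ still has to be proved for the metric of $M$ itself; no lemma elsewhere in the paper hands it to you for free.

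This is precisely what the paper's Lemma \ref{lem:circle} supplies, and it is the real geometric content of the proof: for $M$ irreducible, compact and simply connected, one takes the simple closed geodesic of length $2\pi/\kappa$, where $\kappa^2$ is the maximum of the sectional curvatures; the Morse--Sch\"onberg theorem guarantees there are no conjugate points before arclength $\pi/\kappa$, and since the cut locus coincides with the first conjugate locus in such spaces, each half of the geodesic is length-minimising, which yields the exact formula for $\delta$. The general simply connected case then follows from the de~Rham product decomposition, and the non-simply-connected case (which your argument does not address separately, though it would be subsumed once the right geodesic is exhibited) is delegated to \cite{nonsimply}. If you supply an argument of this kind for the existence of the metrically embedded circle, the rest of your proof goes through and coincides with the paper's route via the Embedding Lemma \ref{lem:embedding} together with the circle result of \cite{nathael}.
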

In other words, the Gaussian kernel is never positive-definite on a compact symmetric space. 
The case where $M$ is not simply connected (real projective space or real Grassmannian, \textit{etc.})
is already taken care of in~\cite{nonsimply}. The case where $M$ is simply connected will now be proved.
First, we prove the following result, referred to as the Embedding Lemma.
\begin{lemma}[Embedding Lemma] 
\label{lem:embedding}
  Assume $M_o$ and $M$ are metric spaces, with $M_o$ isometrically embedded in $M$. Let $k$ be a distance kernel (\textit{i.e.} a kernel of the form (\ref{eq:distK})). If $k$ is positive-definite on $M$, then it is positive-definite on $M_o$.
\end{lemma}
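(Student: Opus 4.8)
The plan is to exploit the fact that a distance kernel is completely determined by the values of the distance function, so that positive-definiteness is a property which transfers verbatim along any map that preserves distances. First I would fix the isometric embedding $\iota:M_o \rightarrow M$. By the very definition of an isometric embedding between metric spaces, this map preserves distances, i.e. $d_M(\iota(x),\iota(y)) = d_{M_o}(x,y)$ for all $x,y \in M_o$. (It is worth noting that in the Riemannian applications to follow, such as the embedded hyperbolic plane of Lemma \ref{lem:embedh2}, this requires the embedded copy to be totally geodesic, so that its intrinsic distance coincides with the ambient one; for the present statement, phrased for abstract metric spaces, distance preservation is simply taken as the definition.)

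Next I would take an arbitrary finite family of points $x_1,\dots,x_N \in M_o$ and coefficients $c_1,\dots,c_N \in \mathbb{C}$, and examine the quadratic sum appearing in (\ref{eq:kpd}). Since $k$ is a distance kernel (\ref{eq:distK}), one has $k(x_i,x_j) = g(d_{M_o}(x_i,x_j))$; and since $\iota$ preserves distances, $g(d_{M_o}(x_i,x_j)) = g(d_M(\iota(x_i),\iota(x_j))) = k(\iota(x_i),\iota(x_j))$. Consequently
\[
  \sum_{i,j=1}^N k(x_i,x_j)\,c_i\hspace{0.02cm}\bar{c}_j = \sum_{i,j=1}^N k(\iota(x_i),\iota(x_j))\,c_i\hspace{0.02cm}\bar{c}_j.
\]
The right-hand side is exactly the defining sum (\ref{eq:kpd}) for $k$ on $M$, evaluated at the points $\iota(x_1),\dots,\iota(x_N) \in M$. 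By the hypothesis that $k$ is positive-definite on $M$, this quantity is nonnegative. As the family of points and coefficients was arbitrary, I would conclude that $k$ is positive-definite on $M_o$.

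I do not expect any substantive obstacle here: the entire content reduces to the elementary observation that a distance kernel sees only distances, and that an isometric embedding preserves distances. The one point that genuinely deserves care is conceptual rather than computational, namely ensuring that \emph{isometrically embedded} is understood in the distance-preserving sense, as opposed to the weaker sense of preserving only the Riemannian metric tensor. This distinction is precisely what makes the transfer argument valid, and it is what must be checked in each geometric application of the lemma.
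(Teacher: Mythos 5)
Your proof is correct and follows essentially the same route as the paper's: the distance kernel restricts along the distance-preserving embedding, so the quadratic form \eqref{eq:kpd} over points of $M_o$ coincides with the one over their images in $M$, and nonnegativity is inherited. The paper also makes your cautionary remark that isometric embedding must be understood in the metric-space (distance-preserving) sense rather than the weaker Riemannian sense.
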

Here, isometric embedding means that the distance function in $M_o$ is the restriction of the distance function in $M$ (this is the isometric embedding of metric spaces, a stronger property than the isometric embedding of Riemannian manifolds). Due to (\ref{eq:distK}), this implies the kernel $k$ on $M_o$ is just the restriction of the same kernel $k$ from $M$. The fact that positive-definiteness is hereditary from $M$ to $M_o $ is immediate from (\ref{eq:kpd}). 

A corollary of the embedding lemma is that the Gaussian kernel (\ref{eq:gausskernel}) cannot be positive-definite on $M$ if it is not positive-definite on $M_o\hspace{0.03cm}$. 

\begin{lemma} \label{lem:circle}
    Any compact simply connected symmetric space $M$ contains an isometrically embedded circle $\sigma$ (the embedding being an isometric embedding of metric spaces).
\end{lemma}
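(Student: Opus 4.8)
The plan is to realize $\sigma$ as the image of a short closed geodesic lying inside a maximal flat of $M$, and to upgrade its minimizing property from the flat to all of $M$. The guiding reduction is this: a round circle of circumference $L$, carrying its intrinsic arc-length metric, is isometrically embedded in $M$ precisely when it is the image of a unit-speed closed geodesic $\gamma$ of period $L$ whose subarcs of length at most $L/2$ are distance-minimizing in $M$. Indeed, for two points at arc-length $\ell\le L/2$ apart the geodesic arc already gives $d_M\le\ell$, while the intrinsic circle distance is $\min(\ell,L-\ell)$, so equality on the shorter arc is exactly what is required. Hence it suffices to produce a single closed geodesic that minimizes up to half its period.

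For the construction, I would write the Cartan decomposition $\mathfrak{g}=\mathfrak{h}\oplus\mathfrak{m}$ (with $\mathfrak{m}\cong T_oM$) and fix a maximal abelian subspace $\mathfrak{a}\subset\mathfrak{m}$; since $M$ is of compact type and non-trivial, $\mathfrak{a}\neq\{0\}$. The flat $A=\exp(\mathfrak{a})\cdot o$ is a totally geodesic torus, isometric to $\mathfrak{a}/\Lambda$ for the full-rank unit lattice $\Lambda=\{v\in\mathfrak{a}:\exp(v)\cdot o=o\}$ (here $A$ is compact because $G$ is). Let $X_0$ be a nonzero vector of $\Lambda$ of minimal norm, put $L=|X_0|$, and set $\gamma(s)=\exp((s/L)X_0)\cdot o$. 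Minimality of $X_0$ forces $\gamma$ to be injective on $[0,L)$ with $\gamma(L)=o$, so its image $\sigma$ is a circle of circumference $L$.

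It then remains to prove the single estimate $d_M(o,\gamma(s))=s$ for $0\le s\le L/2$; by homogeneity (translating one endpoint to $o$) this yields the half-arc minimizing property everywhere. I would derive it from the flat distance formula
\[
 d_M(o,\exp(X)\cdot o)=\min_{v\in\Lambda}|X+v| \qquad (X\in\mathfrak{a}),
\]
combined with a shortest-vector argument: for $X=\tau X_0$ with $|\tau|\le 1/2$ one has $|\tau X_0+v|^2-|\tau X_0|^2 = 2\tau\langle X_0,v\rangle+|v|^2 \ge |v|^2-|\langle X_0,v\rangle| \ge |v|^2-L|v| \ge 0$, the last step because every nonzero $v\in\Lambda$ satisfies $|v|\ge|X_0|=L$. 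Thus the minimum is attained at $v=0$ and equals $|\tau|L=s$, as needed; so $\sigma$ is isometrically embedded.

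The \emph{main obstacle} is the flat distance formula itself, that is, ruling out shortcuts that leave the flat $A$. The inequality $\le$ is clear from the geodesics $t\mapsto\exp(t(X+v))\cdot o$. For $\ge$ I would use that $d_M(o,p)=\min\{|Y|:Y\in\mathfrak{m},\ \exp(Y)\cdot o=p\}$, that $d_M(o,\cdot)$ is $H$-invariant, and the standard structural facts (Helgason) that every $Y\in\mathfrak{m}$ is $\mathrm{Ad}(H)$-conjugate into $\mathfrak{a}$, that the points of $\exp(\mathfrak{a})\cdot o$ lying in one $H$-orbit form a set of shape $WX+\Lambda$ (a Weyl-group orbit modulo the lattice), and that $\Lambda$ is $W$-invariant; together these force any competing $Y$ to satisfy $|Y|\ge\min_{v}|X+v|$. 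Assembling these conjugacy and Weyl-group statements carefully is the delicate part, and it is through these standing structural results that the semisimplicity and simple-connectedness hypotheses enter.
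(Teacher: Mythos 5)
Your proof is correct, but it follows a genuinely different route from the paper's. The paper works with comparison geometry: it first reduces to the irreducible case via the de Rham product decomposition, takes the simple closed geodesic of length $2\pi/\kappa$ (where $\kappa^2$ is the maximal sectional curvature, Helgason p.~334), invokes the Morse--Sch\"onberg theorem to rule out conjugate points before the half-period, and then uses Crittenden's theorem that the cut locus coincides with the first conjugate locus on a compact simply connected symmetric space to conclude that the geodesic minimizes up to half its length. You instead argue Lie-theoretically: you take a shortest nonzero vector of the unit lattice in a maximal flat, get minimality \emph{within} the flat torus by an elementary shortest-lattice-vector computation, and then rule out shortcuts leaving the flat via the distance formula $d_M(o,\exp(X)\cdot o)=\min_{v\in\Lambda}|X+v|$, which you correctly reduce to $\mathfrak{m}=\mathrm{Ad}(H)\hspace{0.02cm}\mathfrak{a}$ together with the fact that each $H$-orbit meets the flat in a single affine Weyl group orbit. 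Your route needs no reduction to the irreducible case and no curvature comparison, and it proves the stronger intermediate statement that the whole maximal torus is isometrically embedded as a metric space; the price is that the Weyl-orbit structure theorem you lean on (Helgason, Ch.~VII) is at least as heavy as the cut-locus theorem the paper uses, and, as you note, assembling it carefully is where the real work sits. Both arguments are sound; yours is the more algebraic and arguably the more self-contained once the structural input is granted.
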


\begin{proof}
Assume that $M$ is an irreducible symmetric space~\cite[Chapter VIII]{helgason1}. Denote the maximum of the sectional curvatures of $M$ by $\kappa^2$.  Then, $M$ admits a simple periodic geodesic $\sigma$ of length $2\pi/\kappa$~\cite{helgason1} (Page 334). Let $x$ be a point on $\sigma$ and choose a parameterisation $\sigma:[0,2\pi] \rightarrow M$ with $\sigma(0) = x$. Then, let $\sigma(\pi) = x^\prime$. 

It can be shown that $\sigma$ is length-minimising from $x$ to $x^\prime$ (when restricted to $[0,\pi]$) and also from $x^\prime$ to $x$ (when restricted to $[\pi,2\pi]$). Recall that the cut-locus of $x$ in $M$ is identical to its first conjugate locus~\cite{crittenden}. By definition of $\kappa$, $x$ has no conjugate points along $\sigma$ before $\sigma(\pi) = x^\prime$~\cite[Morse-Sch\"onberg Theorem, Page 86]{chavel}. Thus $\sigma$ is length-minimising from $x$ to $x^\prime$. An identical argument shows $\sigma$ is also length-minimising from $x^\prime$ to $x$.

Denote $\sigma = \sigma([0,2\pi])$ (the image of the geodesic curve $\sigma$ in $M$). This is the isometrically embedded circle announced in Lemma \ref{lem:circle}. To see this, choose any $y \in \sigma$, $y = \sigma(t)$. If $t \in [0,\pi]$ the distance in $M$ between $x$ and $y$ is $d(x,y) = t$. On the other hand, if $t \in [\pi,2\pi]$, $d(x,y) = 2\pi - t$ (note both of these claims result from the length-minimising property of $\sigma$, just obtained). In other words, the restriction of the distance function from $M$ to $\sigma$ is the same as the distance function of a circle (identified with the interval $[0,2\pi]$ with glued endpoints). 

The assumption that $M$ is irreducible can now be removed since any simply connected symmetric space is a direct product of irreducible symmetric spaces~\cite[Page 381]{helgason1}.
\end{proof}
%\hfill$\blacksquare$ \\[0.1cm]

We recall now (see~\cite{nathael}) that the Gaussian kernel is never positive-definite on the circle. With this in mind, Lemmas \ref{lem:embedding} and \ref{lem:circle} yield Proposition \ref{prop:compact_gausskernel}.

Even though it should be possible to prove Proposition \ref{prop:compact_gausskernel} using an analytic approach via Godement's theorem, the geometric approach used above is more elementary and easier to visualise.

%scope of [11]
% Godement proof

%note that $M$ contains a closed periodic geodesic of length $\frac{2\pi}{\kappa}$ where $\kappa^2$ is the maximum sectional curvature of $M$~\cite{helgason1} ()

%It remains to prove Lemma \ref{lem:circle}. %Let $\mathfrak{g}= \mathfrak{h} + \mathfrak{p}$ be the Cartan decomposition of the Lie algebra $\mathfrak{g}$ of $G$, where $\mathfrak{h}$ is the Lie algebra of $H$. Let $\mathfrak{a}$ be a maximal Abelian subspace of $\mathfrak{p}$. According to~\cite{helgason1} (Chapter V, Section 6), the set $A$ of all $\exp(a)\cdot o$ where $a \in \mathfrak{a}$ ($\exp$ denotes the Lie group exponential) is a flat, totally geodesic submanifold of $M$, and has maximal dimension for this property ($\dim(A)$ is known as the rank of $M$). Moreover (\cite{helgason1}, Chapter V, Theorem 6.2), any geodesic $\gamma$ of $M$ through $o$ is isometric to a geodesic $\gamma^\prime$ of $A$ through $o$, of~the form $\gamma^\prime(t) = h\cdot \gamma(t)$ for some $h \in H$. 

%This implies that $A$ is isometrically embedded in $M$ (an isometric embedding of metric spaces). For any $x \in A$, by the Hopf-Rinow theorem~\cite{chavel}, $M$ admits a length-minimising geodesic $\gamma$ between $o$ and $x$. The distance between $o$ and $x$ in $M$ is then the length of $\gamma$. However, $\gamma$ is isometric to a geodesic $\gamma^\prime$

\section{Non-compact symmetric spaces} \label{sec:nc_ss}
In this section, $M$  will be a non-compact symmetric space, as in Section \ref{sec:l2godement} (for example, a hyperbolic space, a space of positive-definite matrices, \textit{etc.}). Using a mixture of numerical and analytical results, we show that the Gaussian kernel systematically fails to be positive-definite on $M$. First, the L$^{\!\scriptscriptstyle 2}$-$\hspace{0.02cm}$Godement theorem is applied to the two basic cases where $M$ is a hyperbolic plane or a three-dimensional hyperbolic space. Specifically, from (\ref{eq:gausskernel}), the Gaussian kernel corresponds to $g(r) = e^{-\lambda r^2}$, and upon replacing this into (\ref{eq:h2_sphericaltransform}) and (\ref{eq:h3_sphericaltransform}), the following observations are made:
\begin{itemize}
   \item In both cases, the spherical transform $\hat{f}(t)$ is positive when $t$ is small in comparison to $\lambda$ (Proposition \ref{prop:h2_largelambda}, and the remark after Formula (\ref{eq:h3_gaussphere})).
   \item In (\ref{eq:h2_sphericaltransform}), a numerical evaluation shows that $\hat{f}(t)$ oscillates between positive and negative values,
for any $\lambda$ within a certain continuous range, starting near $\lambda = 0$ (Figure \ref{fig:h2}).
%is also positive when $t$ is large in comparison to $\lambda$ (see Proposition \ref{prop:h2_larget}). 
   \item In (\ref{eq:h3_sphericaltransform}), $\hat{f}(t)$ has the same sign as $\sin(t/2\lambda)$. The Gaussian kernel is, therefore, never positive-definite on the hyperbolic space (Proposition \ref{prop:h3_gausskernel}). 
\end{itemize}
Second, the general case uses the Embedding Lemma \ref{lem:embedding}. 

\subsection{Hyperbolic plane} \label{sec:gauss_h2} For the Gaussian kernel on the hyperbolic plane, the behavior of the spherical transform $\hat{f}(t)$, when $t$ is small in comparison to $\lambda$, is the following.  
\begin{proposition} \label{prop:h2_largelambda}
  Let $g(r) = e^{-\lambda r^2}$ in (\ref{eq:h2_sphericaltransform}). For any $T > 0$, there exists $\lambda_T > 0$ such that $\lambda > \lambda_T$ implies $\hat{f}(t) \geq 0$ for all $t \leq T$.  
\end{proposition}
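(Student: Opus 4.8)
The plan is to exploit the fact that, as $\lambda \to \infty$, the Gaussian weight $e^{-\lambda r^2}$ in (\ref{eq:h2_sphericaltransform}) concentrates near $r = 0$, precisely where the spherical function is largest and positive. Since $\varphi_t = \varphi_{-t}$, i.e.\ $P_{-\frac{1}{2}+\mathrm{i}t}(\cosh r) = P_{-\frac{1}{2}-\mathrm{i}t}(\cosh r)$, the transform $\hat f$ is even and it suffices to treat $t \in [0,T]$. At $r = 0$ one has $P_{-\frac{1}{2}+\mathrm{i}t}(1) = \varphi_t(o) = 1$ for every $t$, so for large $\lambda$ the bulk of the integral comes from a region where the integrand is positive, while the remaining tail is killed by the Gaussian. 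The task is to turn this competition into a quantitative estimate that is uniform in $t \le T$.

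First I would record two facts about the conical function $P_{-\frac{1}{2}+\mathrm{i}t}(\cosh r) = \varphi_t(x)$, where $r = d(o,x)$. Since each $\varphi_t$ is positive-definite (as noted after Theorem~\ref{prop:l2god}) and $\varphi_t(o) = 1$, applying (\ref{eq:kpd}) with $N = 2$ to the Gram matrix at the points $o$ and $x$ forces $1 - |\varphi_t(x)|^2 \ge 0$, i.e.\ the uniform bound $|P_{-\frac{1}{2}+\mathrm{i}t}(\cosh r)| \le 1$ for all $r \ge 0$ and $t \in \mathbb{R}$. Second, $(r,t) \mapsto P_{-\frac{1}{2}+\mathrm{i}t}(\cosh r)$ is jointly continuous and equals $1$ on the compact set $\{0\} \times [0,T]$; a standard compactness argument then yields a single $\delta = \delta(T) > 0$ with
\begin{equation*}
  P_{-\frac{1}{2}+\mathrm{i}t}(\cosh r) \ge \tfrac{1}{2} \qquad \text{for all } (r,t) \in [0,\delta]\times[0,T].
\end{equation*}
Producing this uniform $\delta$ is the crux of the argument and the step I expect to be the main obstacle, since it is what converts the pointwise normalization $\varphi_t(o) = 1$ into simultaneous control over the whole family of spherical functions.

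With $\delta$ in hand I would split (\ref{eq:h2_sphericaltransform}) as $\int_0^\delta + \int_\delta^\infty$ (the factor $2\pi$ is irrelevant to the sign). On $[0,\delta]$ the bound above together with $\sinh r \ge r$ gives
\begin{equation*}
  \int_0^\delta e^{-\lambda r^2} P_{-\frac{1}{2}+\mathrm{i}t}(\cosh r)\sinh(r)\,dr \ge \tfrac{1}{2}\int_0^\delta e^{-\lambda r^2} r\,dr = \tfrac{1}{4\lambda}\bigl(1 - e^{-\lambda\delta^2}\bigr),
\end{equation*}
which is of order $1/\lambda$. On $[\delta,\infty)$, the bound $|P_{-\frac{1}{2}+\mathrm{i}t}(\cosh r)| \le 1$ combined with $\sinh r \le \tfrac{1}{2}e^r$ and the elementary estimate $\lambda r^2 - r \ge \tfrac{\lambda}{2}r^2$ (valid for $r \ge \delta$ and $\lambda \ge 2/\delta$) bounds the tail in absolute value by a quantity of order $\lambda^{-1}e^{-\lambda\delta^2/2}$, exponentially small in $\lambda$. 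Since the head is polynomially large and the tail exponentially small, there is $\lambda_T > 0$, depending only on $\delta$ and hence on $T$, such that the head dominates the tail whenever $\lambda > \lambda_T$, giving $\hat f(t) \ge 0$ for all $t \le T$. Every constant here is explicit, so the only genuinely qualitative ingredient is the uniform positivity neighborhood $\delta(T)$ from the previous paragraph.
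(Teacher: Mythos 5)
Your proof is correct, but it takes a genuinely different and softer route than the paper's. The paper substitutes the second-order Taylor expansion $P_{-\frac{1}{2}+\mathrm{i}t}(\cosh r) = 1 - (1/4+t^2)(r^2/4) + \varepsilon(t,r)$, with $|\varepsilon(t,r)| \le C(T,R)\hspace{0.02cm}r^4$ uniformly for $t \le T$, into (\ref{eq:h2_sphericaltransform}), splits $\hat f$ into three pieces, and computes the large-$\lambda$ asymptotics term by term, arriving at $\hat f(t) \sim \frac{\lambda^{-1}}{2} - \left(t^2 - \frac{13}{12}\right)\frac{\lambda^{-2}}{8}$ uniformly in $t \le T$, whence positivity for $\lambda \gtrsim T^2/4$. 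You instead use only three qualitative facts --- $\varphi_t(o)=1$, the uniform bound $|\varphi_t| \le 1$ (which the paper also invokes, both from positive-definiteness and from~\cite{terras1}), and joint continuity of $(r,t)\mapsto \varphi_t$ --- and a tube-lemma compactness argument to manufacture the uniform neighborhood $[0,\delta(T)]\times[0,T]$ on which $\varphi_t \ge 1/2$; your head/tail estimates ($\asymp \lambda^{-1}$ versus $O(\lambda^{-1}e^{-\lambda\delta^2/2})$) are all sound, and the step you flag as the crux is indeed valid since the integral formula (\ref{eq:spherical_nc}) over the compact group $H$ gives joint continuity directly. What your argument buys is generality: nothing in it is specific to the hyperbolic plane, so it proves the extension to arbitrary non-compact symmetric spaces asserted without proof in the Remark following the proposition. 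What the paper's computation buys is quantitative information: an explicit threshold $\lambda_T$, the precise leading asymptotics of $\hat f$, and the coefficient $-(t^2-\frac{13}{12})\lambda^{-2}/8$ that signals where the transform should begin to change sign --- information that feeds into the numerical study in Figure~\ref{fig:h2}. Your proof is a valid, and in some respects stronger, alternative.
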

The proof of Proposition \ref{prop:h2_largelambda} is given in Section \ref{sec:proof_largelambda}. The behavior of $\hat{f}(t)$ for larger values of $t$ was investigated numerically, with the results reported in Figure \ref{fig:h2}. Note that, at the time of writing, we were not able to provide a full proof or counterexample to the negativity of the function $\hat f(t_\lambda)$ at some $t_\lambda$ for all $\lambda$.

Figure \ref{fig:h2_1} shows the sign of $\hat{f}$ in the $(t,\lambda)$-plane. The blue (positive) area on the left confirms Proposition \ref{prop:h2_largelambda}. 
It is followed to the right by an alternation of red and blue areas, which shows that $\hat{f}(t)$ oscillates between positive and negative values (note that the $\lambda$ axis begins at $\lambda = 0.2$, since the numerical evaluation of the integral (\ref{eq:h2_sphericaltransform}) becomes unstable for much smaller values of $\lambda$). Figure \ref{fig:h2_2} shows an individual plot of $\hat{f}(t)$, obtained for a fixed $\lambda = 0.05$. 

It is clear from Figure \ref{fig:h2_1} that the L$^{\!\scriptscriptstyle 2}$-$\hspace{0.02cm}$Godement theorem implies the Gaussian kernel is not positive-definite on the hyperbolic plane, for any $\lambda$ in the range of the $\lambda$-axis of this figure.  
\begin{remark}
Proposition \ref{prop:h2_largelambda} continues to hold for a general non-compact $M$, as above. The proof of this general statement is not provided in order to maintain a reasonable length.
\end{remark}
%Before going on, it should be noted that Proposition \ref{prop:h2_largelambda} continues to hold for a general non-compact $M$, as above. Here, the proof of this general statement is not provided, in order to maintain a reasonable length. 
\begin{figure}
\centering
\begin{subfigure}{.55\textwidth}
  \centering
\includegraphics[width=1.0\linewidth]{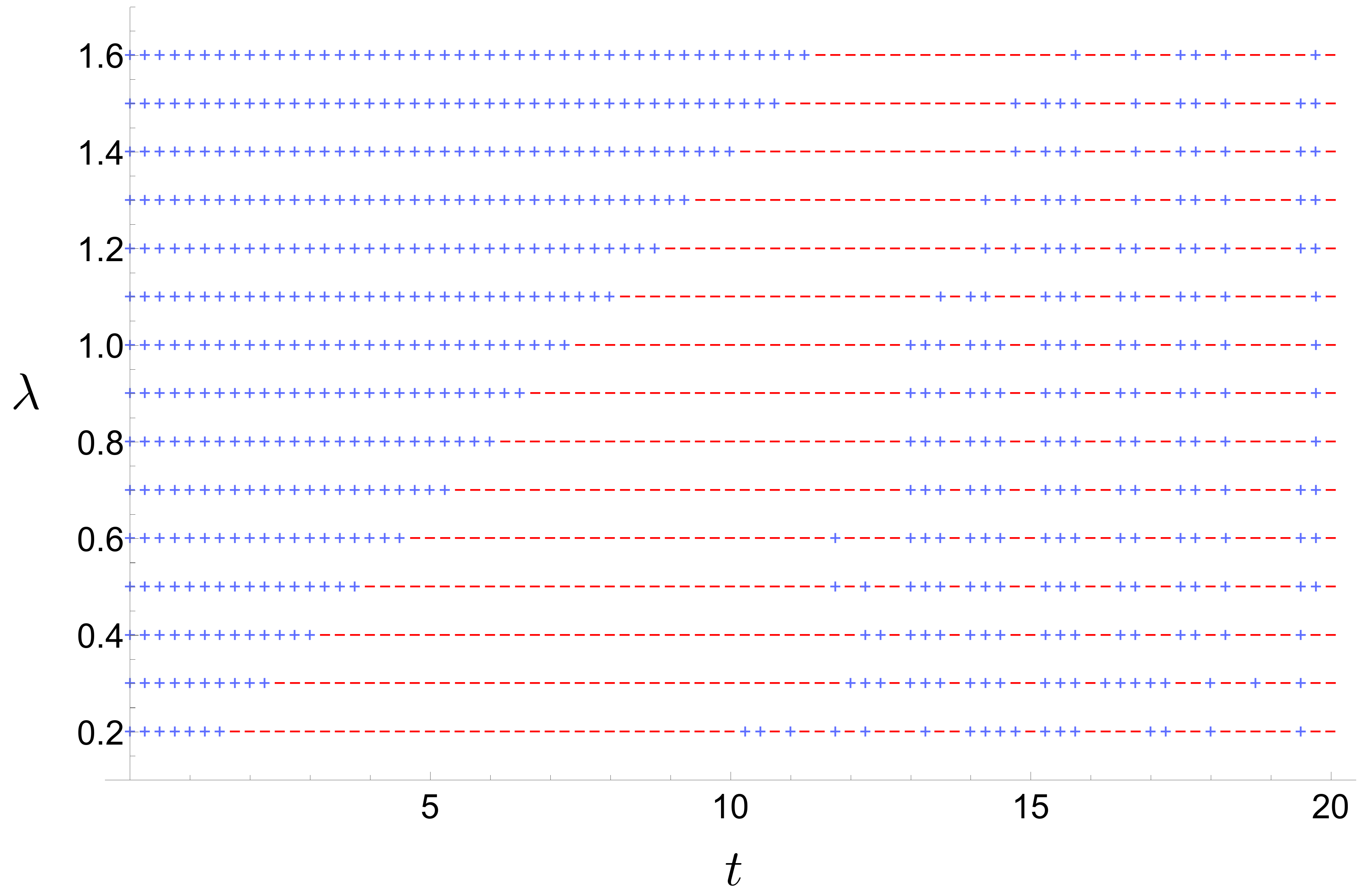}
    \caption{{Sign of $\hat{f}$ at a grid of points in the $(t,\lambda)$-plane}}
\label{fig:h2_1}
\end{subfigure}%
\begin{subfigure}{.44\textwidth}
  \centering
    \includegraphics[width=1.0\linewidth]{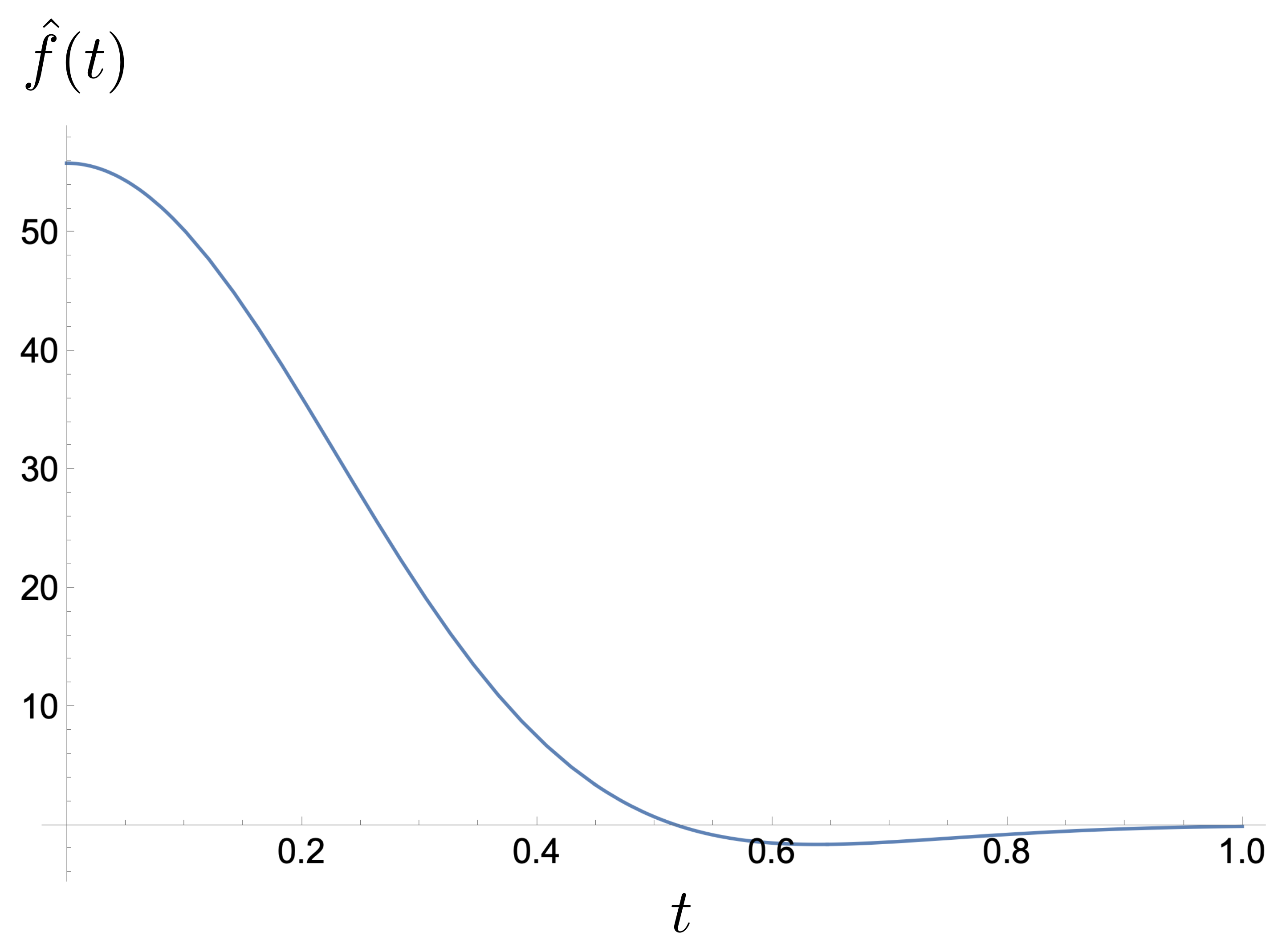}
    \caption{{Plot of $\hat{f}(t)$ when $\lambda = 0.05$}}
\label{fig:h2_2}
\end{subfigure}
\caption{Spherical transform for the Gaussian kernel on the hyperbolic plane}
\label{fig:h2}
\end{figure}

%\newpage

%\begin{proposition} \label{prop:h2_larget}
% let $g(r) = e^{-\lambda r^2}$ in (\ref{eq:h2_sphericaltransform}). Then, $\hat{f}(t) > 0$ when $t$ is sufficiently large. Precisely, $\lambda$ being fixed,
%$
%   \hat{f}(t) \sim \frac{\pi}{\sqrt{2}}\,t^{-3}
%$
%as $t \rightarrow \infty$ ($a \sim b$ means the ratio $a/b$ converges to $1$). 
%\end{proposition}
%The behavior described in Proposition \ref{prop:h2_largelambda} is quite general. In fact, Proposition \ref{prop:h2_largelambda} holds true for any non-compact symmetric space, and the proof of this general statement is very similar to the one given in Section \ref{sec:proofs_gauss_h2} (this is not pursued, in order to maintain a reasonable length).  
%
%Proposition \ref{prop:h2_largelambda} and \ref{prop:h2_larget} do not say anything about negative values of $\hat{f}(t)$. However, $\ldots$
%
\subsection{Hyperbolic space} \label{sec:gauss_h3}
In this case, $\hat{f}(t)$ can be found in closed form.
\begin{equation} \label{eq:h3_gaussphere}
  g(r) =  e^{-\lambda r^2} \text{ in (\ref{eq:h3_sphericaltransform})}\,\Longrightarrow\, \hat{f}(t) = \left(\frac{2\pi}{t}\right)\!\left(\frac{\pi}{\lambda}\right)^{\!\frac{1}{2}}\hspace{0.03cm} e^{(1-t^2)/4\lambda} \sin\!\left(\frac{t}{2\lambda}\right).
\end{equation}
This shows that the sign of $\hat{f}(t)$ is determined by $\sin(t/2\lambda)$. This remains positive for $t \leq 2\pi\lambda$, and then oscillates indefinitely. From the L$^{\!\scriptscriptstyle 2}$-$\hspace{0.02cm}$Godement theorem, the following is immediate.
\begin{proposition} \label{prop:h3_gausskernel}
     The Gaussian kernel (\ref{eq:gausskernel}) is not positive-definite on the hyperbolic space for any value of $\lambda$. 
\end{proposition}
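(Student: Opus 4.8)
The plan is to apply the L$^{\!\scriptscriptstyle 2}$-$\hspace{0.02cm}$Godement theorem (Theorem \ref{prop:l2god}) to the $H$-invariant function $f(x) = e^{-\lambda d(o,x)^2}$, which corresponds to $g(r) = e^{-\lambda r^2}$ in the distance-kernel representation (\ref{eq:distK}). Two hypotheses must be checked before the theorem applies: that $f$ is continuous (clear, since $r \mapsto e^{-\lambda r^2}$ is smooth and $d(o,\cdot)$ is continuous), and that $f \in L^2_H(M)$. For the latter, integrating in geodesic spherical coordinates on the three-dimensional hyperbolic space gives $\int_M |f(x)|^2\,\mathrm{vol}(dx) = 4\pi \int_0^\infty e^{-2\lambda r^2}\sinh^2(r)\,dr$, which is finite because the Gaussian factor decays faster than $\sinh^2(r) \sim \tfrac{1}{4} e^{2r}$ grows. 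So the theorem is applicable for every $\lambda > 0$.

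The substantive step is the closed-form evaluation of the spherical transform, namely formula (\ref{eq:h3_gaussphere}). I would obtain it from the sine-transform formula (\ref{eq:h3_sphericaltransform}) by computing $\int_0^\infty e^{-\lambda r^2}\sinh(r)\sin(tr)\,dr$. Since the integrand is even in $r$ (both $\sinh$ and $\sin(t\,\cdot)$ are odd), this equals half the integral over all of $\mathbb{R}$. Expanding $\sinh(r)$ and $\sin(tr)$ into exponentials reduces the problem to four shifted Gaussian integrals, each of the form $\int_{-\infty}^\infty e^{-\lambda r^2 + cr}\,dr = \sqrt{\pi/\lambda}\,e^{c^2/4\lambda}$ (valid for complex $c$ by completing the square and analytic continuation). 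Collecting the four terms recombines the exponentials into a single factor $\sin(t/2\lambda)$, yielding exactly (\ref{eq:h3_gaussphere}).

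With (\ref{eq:h3_gaussphere}) in hand the conclusion is immediate. For $t > 0$ all three prefactors $2\pi/t$, $(\pi/\lambda)^{1/2}$ and $e^{(1-t^2)/4\lambda}$ are strictly positive, so $\hat{f}(t)$ has the same sign as $\sin(t/2\lambda)$. This is strictly negative on the interval $(2\pi\lambda, 4\pi\lambda)$, a set of positive Lebesgue measure on which the Plancherel density $|c(t)|^{-2}$ is also positive. Hence the positivity condition ``$\hat{f}(t) \geq 0$ for almost all $t$'' in Theorem \ref{prop:l2god} fails, so $f$ --- and therefore the kernel $k$ --- is not positive-definite. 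As this argument is valid for every $\lambda > 0$, the Gaussian kernel is never positive-definite on the hyperbolic space, proving Proposition \ref{prop:h3_gausskernel}.

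The main obstacle is really only the integral computation behind (\ref{eq:h3_gaussphere}): one must justify shifting the Gaussian integration contour into the complex plane (equivalently, the analytic continuation of $c \mapsto \sqrt{\pi/\lambda}\,e^{c^2/4\lambda}$), and keep careful track of signs when recombining the four exponential terms. Once the closed form is established, the positive-definiteness question collapses to the elementary observation that $\sin(t/2\lambda)$ changes sign, in sharp contrast with the hyperbolic plane, where no such closed form is available and the sign of $\hat{f}$ had to be studied numerically.
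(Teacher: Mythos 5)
Your proposal is correct and follows essentially the same route as the paper: verify the hypotheses of Theorem \ref{prop:l2god}, evaluate the spherical transform in closed form to obtain (\ref{eq:h3_gaussphere}), and conclude from the sign changes of $\sin(t/2\lambda)$. The only cosmetic difference is that the paper packages the exponential-expansion computation as the general Lemma \ref{lem:h3_sine} (via the moment-generating function, reused later for the hyperbolic secant kernel), whereas you carry out the same four shifted Gaussian integrals directly.
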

%The L$^{\!\scriptscriptstyle 2}$-$\hspace{0.02cm}$Godement theorem is indeed applicable to this kernel, since the corresponding function $f(x) = k(o,x)$ is continuous and in $L^2_H(M)$, for any $\lambda > 0$. 
%
The expression (\ref{eq:h3_gaussphere}) can be obtained from the following general lemma, proved in Section \ref{sec:proof_h3_sine}.
\begin{lemma} \label{lem:h3_sine}
In (\ref{eq:h3_sphericaltransform}),  extend $g(r)$ to an even function of $r \in \mathbb{R}$. Then,
\begin{equation} \label{eq:h3_sine}
  \hat{f}(t) = \frac{\pi}{t}\,\mathrm{Im}\left\lbrace G(\mathrm{i}t+1) - G(\mathrm{i}t-1)\right\rbrace
\end{equation}
where $\mathrm{Im}$ denotes the imaginary part and $G$ is the moment-generating function
$$
G(s) =\int^{+\infty}_{-\infty} g(r)\hspace{0.02cm}e^{sr}dr.
$$
\end{lemma}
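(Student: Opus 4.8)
The plan is to reduce (\ref{eq:h3_sine}) by a direct computation: symmetrise the half-line integral in (\ref{eq:h3_sphericaltransform}) to the whole real line, then split both $\sinh(r)$ and $\sin(tr)$ into complex exponentials, so that the resulting integrals are immediately recognised as values of the moment-generating function $G$.

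First I would use the even extension of $g$ to rewrite the integral. Since $g$ is even while $\sinh(r)$ and $\sin(tr)$ are both odd, the integrand $g(r)\sinh(r)\sin(tr)$ is even, so the integral over $[0,\infty)$ is half the integral over $\mathbb{R}$; this turns the prefactor $4\pi/t$ into $2\pi/t$, giving
\begin{equation*}
 \hat{f}(t) = \frac{2\pi}{t}\int^{+\infty}_{-\infty} g(r)\sinh(r)\sin(tr)\,dr.
\end{equation*}
Next I would substitute $\sinh(r) = (e^{r}-e^{-r})/2$ and $\sin(tr) = (e^{\mathrm{i}tr}-e^{-\mathrm{i}tr})/(2\mathrm{i})$ and expand the product into the four exponentials $e^{(\pm 1\pm\mathrm{i}t)r}$. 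Integrating term by term against $g$ and invoking the definition of $G$ then yields
\begin{equation*}
 \hat{f}(t) = \frac{2\pi}{t}\cdot\frac{1}{4\mathrm{i}}\left[G(1+\mathrm{i}t) - G(1-\mathrm{i}t) - G(-1+\mathrm{i}t) + G(-1-\mathrm{i}t)\right].
\end{equation*}

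To finish, I would exploit that $g$ is real-valued (recall any positive-definite distance kernel is real, as noted after (\ref{eq:distK})), whence $\overline{G(s)} = G(\bar s)$ and in particular $G(1-\mathrm{i}t)=\overline{G(1+\mathrm{i}t)}$, $G(-1-\mathrm{i}t)=\overline{G(-1+\mathrm{i}t)}$. Applying $z-\bar z = 2\mathrm{i}\,\mathrm{Im}\{z\}$ to each conjugate pair collapses the bracket to $2\mathrm{i}\,\mathrm{Im}\{G(1+\mathrm{i}t) - G(-1+\mathrm{i}t)\}$, and rewriting $1+\mathrm{i}t = \mathrm{i}t+1$ and $-1+\mathrm{i}t = \mathrm{i}t-1$ recovers (\ref{eq:h3_sine}) exactly after the factors of $\mathrm{i}$ and $2$ cancel.

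The one delicate point — and the main obstacle — is the convergence of $G$ at the arguments $\mathrm{i}t\pm 1$, which lie off the imaginary axis and require $g(r)e^{\pm r}$ to be integrable over $\mathbb{R}$; this is what justifies both the term-by-term integration and the symmetrisation by absolute convergence. I would therefore state the lemma under the tacit assumption that $G$ is finite on the strip $|\mathrm{Re}(s)|\leq 1$, and remark that this holds in the case of interest, the Gaussian $g(r)=e^{-\lambda r^2}$, whose moment-generating function $G(s)=\sqrt{\pi/\lambda}\,e^{s^2/4\lambda}$ is entire. Substituting this $G$ into (\ref{eq:h3_sine}) and using $(\mathrm{i}t\pm 1)^2 = (1-t^2)\pm 2\mathrm{i}t$ produces the factor $2\mathrm{i}\sin(t/2\lambda)$ multiplied by the real quantity $\sqrt{\pi/\lambda}\,e^{(1-t^2)/4\lambda}$, reproducing the closed form (\ref{eq:h3_gaussphere}) and thereby serving as a consistency check on the general formula.
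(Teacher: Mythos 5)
Your proposal is correct and follows essentially the same route as the paper's proof: symmetrise the integral using the evenness of $g(r)\sinh(r)\sin(tr)$, expand into exponentials, and identify the resulting integrals as values of $G$ (the paper simply extracts the imaginary part one step earlier via $\sin(tr)=\mathrm{Im}\{e^{\mathrm{i}tr}\}$ rather than collapsing conjugate pairs at the end, which is equivalent given that $g$ is real). Your explicit remark on the integrability of $g(r)e^{\pm r}$ and the Gaussian consistency check are sensible additions but do not change the argument.
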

Indeed, (\ref{eq:h3_gaussphere}) follows from (\ref{eq:h3_sine}) using the expression of the moment-generating function of a univariate normal distribution, 
$$
G(s) = \left(\frac{\pi}{\lambda}\right)^{\!\frac{1}{2}}e^{s^2/4\lambda}.
$$
%when $G(s)$ is the moment-generating function of a univariate normal distribution. %In the notation of (\ref{eq:h3_sine}), 

%remark about L2    

%$$
%G(s) = \left(\frac{\pi}{\lambda}\right)^{\!\frac{1}{2}}e^{s^2/4\lambda}
%$$
%so that (\ref{eq:h3_gaussphere}) becomes immediate.     
%
\subsection{The general case}
Let $M$ be any non-compact symmetric space, as in Section \ref{sec:l2godement}\hfill\linebreak (so the isometry group $G$ of $M$ is semisimple, of non-compact type, and has finite center~\cite{helgason1}).  The following lemma will be proved in Section \ref{sec:proof_embedh2}.
\begin{lemma} \label{lem:embedh2}
  Under the assumptions just stated, $M$ contains an isometrically embedded\hfill\linebreak hyperbolic plane.
\end{lemma}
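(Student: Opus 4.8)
The plan is to realise the hyperbolic plane as a two-dimensional totally geodesic submanifold of $M$ built from a single restricted root, and then to upgrade ``totally geodesic'' to ``isometrically embedded as a metric space'' using the Hadamard structure of $M$. Write the Cartan decomposition $\mathfrak{g} = \mathfrak{h} \oplus \mathfrak{p}$ attached to the symmetric pair, where $\theta$ is the Cartan involution ($\theta|_{\mathfrak{h}} = \mathrm{id}$, $\theta|_{\mathfrak{p}} = -\mathrm{id}$) and $\mathfrak{p} \cong T_o M$. The tool that converts algebra into geometry is Cartan's theorem~\cite[Ch.\ IV]{helgason1}: the totally geodesic submanifolds of $M$ through $o$ are exactly the sets $\exp_o(\mathfrak{s})$, where $\mathfrak{s} \subseteq \mathfrak{p}$ is a \emph{Lie triple system}, that is, $[[\mathfrak{s},\mathfrak{s}],\mathfrak{s}] \subseteq \mathfrak{s}$. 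Thus it suffices to exhibit a two-dimensional Lie triple system in $\mathfrak{p}$ whose associated submanifold carries constant negative curvature.

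To construct it, recall that since $G$ is of non-compact type the Iwasawa component $\mathfrak{n}$ is nonzero, so the set $\Sigma$ of restricted roots of $\mathfrak{g}$ with respect to $\mathfrak{a}$ is nonempty; fix $\alpha \in \Sigma$ and a nonzero $E \in \mathfrak{g}_\alpha$. Because $\theta$ maps $\mathfrak{g}_\alpha$ onto $\mathfrak{g}_{-\alpha}$, the elements $E$, $\theta E$ and $H_0 := [E,\theta E] \in \mathfrak{a}$ span, after normalisation, a subalgebra $\mathfrak{l}$ obeying the standard $\mathfrak{sl}(2)$ relations; the relevant structure constant $\alpha(H_0)$ has the sign forced by the non-compactness of the form $-B(\cdot,\theta\,\cdot)$, so $\mathfrak{l} \cong \mathfrak{sl}(2,\mathbb{R})$ rather than $\mathfrak{su}(2)$. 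This $\mathfrak{l}$ is $\theta$-stable, hence inherits a Cartan decomposition $\mathfrak{l} = (\mathfrak{l}\cap\mathfrak{h}) \oplus (\mathfrak{l}\cap\mathfrak{p})$, with $\mathfrak{l}\cap\mathfrak{p} = \mathbb{R}H_0 \oplus \mathbb{R}(E - \theta E)$ two-dimensional. Being the $\mathfrak{p}$-part of a subalgebra, $\mathfrak{s} := \mathfrak{l}\cap\mathfrak{p}$ is automatically a Lie triple system (one checks $[[\mathfrak{s},\mathfrak{s}],\mathfrak{s}] \subseteq [\mathfrak{l}\cap\mathfrak{h},\,\mathfrak{l}\cap\mathfrak{p}] \subseteq \mathfrak{s}$), and the corresponding totally geodesic submanifold $S = \exp_o(\mathfrak{s})$ is the Riemannian symmetric space of $SL(2,\mathbb{R})$, namely a complete, simply connected surface of constant negative curvature.

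It remains to check that this embedding is isometric in the strong, metric-space sense demanded by Lemma~\ref{lem:embedding}. Here I would use that a symmetric space of non-compact type is complete, simply connected, and of non-positive curvature, hence a Hadamard manifold in which geodesics are unique and globally length-minimising. Consequently the complete totally geodesic submanifold $S$ is convex: the ambient minimising geodesic joining any two points of $S$ stays inside $S$, so the intrinsic distance on $S$ coincides with the restriction of $d$, which is precisely the metric-space isometric embedding. The main obstacle I anticipate is not the algebra but this last step, namely justifying convexity and global minimality carefully enough that intrinsic and ambient distances agree \emph{exactly}; a secondary point to treat with care is normalisation, since the inherited metric makes $S$ a hyperbolic plane of some curvature $-c$ not necessarily equal to $-1$ --- immaterial for the Gaussian-kernel application, where rescaling $d$ merely rescales $\lambda$, but worth a remark if the statement is read literally.
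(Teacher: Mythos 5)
Your proposal is correct and follows essentially the same route as the paper: both build an $\mathfrak{sl}(2,\mathbb{R})$-triple $\{E,\theta E,[E,\theta E]\}$ from a restricted root, realise the corresponding two-dimensional totally geodesic submanifold (you via the Lie triple system $\mathfrak{l}\cap\mathfrak{p}$, the paper via the orbit of the connected subgroup with Lie algebra $\mathfrak{l}$ --- two equivalent formulations of the same fact), and then invoke the Hadamard structure of $M$ to upgrade ``totally geodesic'' to a metric-space isometric embedding. Your closing remarks on convexity and on the curvature normalisation match the paper's own treatment, which likewise identifies $M_o\simeq SL(2,\mathbb{R})/SO(2)$ only ``up to an irrelevant scaling factor.''
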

The numerical results of Paragraph \ref{sec:gauss_h2} show that the Gaussian kernel is not positive-definite on the hyperbolic plane, at least for any $\lambda$ in the range appearing in Figure \ref{fig:h2_1}. The Embedding Lemma \ref{lem:embedding} and Lemma \ref{lem:embedh2} therefore imply that the Gaussian kernel is not positive-definite on any non-compact Riemannian symmetric space, for $\lambda$ in this same range
(the additional assumptions of Lemma \ref{lem:embedh2} merely serve to prevent $M$ from being Euclidean).  

On the other hand, Proposition \ref{prop:h3_gausskernel} shows the Gaussian kernel is never positive-definite on the hyperbolic space. By the Embedding Lemma, it is therefore never positive-definite on symmetric spaces that contain an isometrically embedded hyperbolic space. Examples of such symmetric spaces include all spaces of $n \times n$ real, complex, or quaternion, positive-definite matrices (with the requirement that $n \geq 4$ for the real case)~\cite{embeddingspaper}. 

Hopefully, the numerical treatment of the hyperbolic plane case in Paragraph \ref{sec:gauss_h2} will soon be replaced with a complete analytical treatment, similar to the one that we conducted for the hyperbolic space case in Paragraph \ref{sec:gauss_h3}. This is the only step still missing to obtain a fully analytical proof of the statement that the Gaussian kernel is never positive-definite on any non-Euclidean symmetric space. 

%From the embedding lemma, and Propositions \ref{prop:h3_gausskernel} and \ref{prop:embedh3}, the following is clear.
%\begin{proposition} \label{prop:general_nc}
%  Let $M$ be one of the classical non-compact Riemannian symmetric spaces. If $\dim M \geq 3$, then the Gaussian kernel (\ref{eq:gausskernel}) is not positive-definite on $M$, for any value of~$\lambda$. 
%\end{proposition}
%For example, the Gaussian kernel is never positive-definite on a space of $n\times n$ positive-definite matrices with real entries, if $n \geq 3$, and it is never positive-definite on a space of $n \times n$ positive-definite matrices with complex or quaternion entries (irrespective of the matrix dimension $n$). 

%Proposition \ref{prop:general_nc} covers all the classical non-compact symmetric spaces, except for the hyperbolic plane. The numerical results of Paragraph \ref{sec:gauss_h2} seem to suggest this proposition will soon be extended to all the classical non-compact symmetric spaces. 

%The proof of Proposition \ref{prop:embedh3} is given in Section \ref{sec:proof_embedh3}. It is
%Paragraph \ref{sec:gauss_h2} only dealt with this case numerically. However, 

%\section{} \label{sec:alternative}

%% $M$ always includes H2

%% certain $M$ include H3

\section{The hyperbolic secant revisited: non-Euclidean Herschel-Maxwell distributions} \label{app:sech}

Recall the hyperbolic secant kernel on the hyperbolic plane from Section \ref{subsubsec:hsecant}, which was shown to be positive-definite in contrast to the Gaussian kernel. Here, we will present a new interpretation of the hyperbolic secant kernel as the Herschel-Maxwell kernel on the hyperbolic plane. In Euclidean space, the Herschel-Maxwell theorem is an elegant characterization of the Gaussian distribution originally due to the astronomer J.F.W. Herschel, who was studying two-dimensional errors in astronomical observations. Ten years later, James Clerk Maxwell presented a three-dimensional version of the same argument to show that the stationary probability distribution of the velocities of molecules in a gas follows a Gaussian distribution under the assumptions that (P1) the components of the random vector in an orthogonal coordinate system are independent, and (P2) the distribution only depends on the magnitude of the vector~\cite{Gyenis,jaynes,maxwell1860}. A modern statement of the theorem is that if the distribution of a random vector with independent components is invariant under rotations, then the components must be identically distributed as a Gaussian distribution.

Here, we will follow the Herschel-Maxwell derivation as it appears in Jaynes~\cite{jaynes}, which relies on the assumption of the existence of densities of the components of the random variables and solving a functional equation. This derivation closely follows Maxwell's own argument. Let $(u,v)$ be a Cartesian coordinate system in the Euclidean plane. We seek to determine a continuous joint probability distribution $p(u,v)dudv$ that satisfies assumptions P1 and P2. By assumption P1, we have 
\begin{equation} \label{HM independence}
   p(u,v)dudv=h(u)du \, h(v)dv 
\end{equation} for some function $h$. By assumption P2, we have 
\begin{equation} \label{HM distance}
   p(u,v)dudv=g(r)r dr d\theta
\end{equation}
as the density cannot depend on the angle $\theta$ when expressed in polar coordinates $(r,\theta)$. (\ref{HM independence}) and (\ref{HM distance}) combine to give the functional equation
\begin{equation}
    h(u)h(v)=g\left(\sqrt{u^2+v^2}\right),
\end{equation}
which admits the general solution
\begin{equation}
    h(u)=\sqrt{\frac{\lambda}{\pi}}e^{-\lambda u^2}, \quad \quad \quad p(u,v)=\frac{\lambda}{\pi}e^{-\lambda(u^2+v^2)}, 
\end{equation}
for any $\lambda > 0$
upon insisting that the distribution be normalized. 

To replicate this derivation in the hyperbolic plane, we must first specify what we mean by an `orthogonal coordinate system' in the hyperbolic plane. The most natural choice for such a coordinate system seems to be offered by Lobachevsky coordinates, determined by specifying a directed geodesic ($u$-axis) through the origin. The Lobachevsky coordinates of a point $x$ are then found by dropping a perpendicular to the $u$-axis. The signed distance from the foot of the perpendicular to the origin is the $u$-coordinate of the point, and the $v$-coordinate is the signed distance along the perpendicular to the $u$-axis (with the distance taken to be positive on one side of the $u$-axis and negative on the other). With respect to such a coordinate system, we seek a joint probability distribution $p(x)\hspace{0.02cm}\mathrm{vol}(dx) = p(u,v)\hspace{0.02cm}\mathrm{vol}(dx)= h(u)h(v)\hspace{0.02cm}\mathrm{vol}(dx)$, where $\mathrm{vol}(dx)$ denotes the Riemannian volume form in the hyperbolic plane (P1). Moreover, (P2) generalizes to the assumption that  $p(x)$ can only depend on the Riemannian distance to the origin $r=d(o,x)$, which once again yields the functional equation $h(u)h(v)=g(r)$. As this equation holds for all $(u,v)$, we can take any point on the $u$-axis to obtain $g(u)=h(u)h(0)$. That is, we have the functional equation
\begin{equation} \label{h functional equation}
    h(u)h(v)=h(0)h(r)
\end{equation}
where $(u,v)$ and $r$ are related by
\begin{equation}
    \cosh(r) = \cosh(u) \cosh(v)
\end{equation}
according to the hyperbolic Pythagorean theorem. Setting $\tilde{h}=(h\circ \mathrm{arcosh})/h(0)$, (\ref{h functional equation}) becomes
$ \tilde{h}(\cosh u)\tilde{h}(\cosh v) = \tilde{h}(\cosh u \cosh v)$, which admits the general solution $\tilde{h}(\cosh u) = (\cosh u)^{-a}$, for $a \in \mathbb{R}$ (i.e., $\tilde{h}$ is a power function). Thus,
\begin{equation*}
    h(r)=h(0)(\cosh(r))^{-a},
\end{equation*}
which yields the probability density
\begin{equation*}
   p(x)\hspace{0.02cm}\mathrm{vol}(dx) = h(0)^2 (\cosh(r))^{-a} \hspace{0.02cm} \mathrm{vol}(dx) = h(0)^2 (\cosh(r))^{-a} \sinh(r) dr d\theta,
\end{equation*}
in geodesic polar coordinates. Note that this density is normalizable only if $a > 1$. The constant $h(0)$ can be determined through normalizing the distribution, which yields 
\begin{equation} \label{2D hyperbolc HM}
    p(x)\hspace{0.02cm}\mathrm{vol}(dx) = \frac{1}{\pi}\hspace{0.02cm}\frac{\Gamma\left(\frac{a+1}{2}\right)}{\Gamma\left(\frac{a-1}{2}\right)} \left(\cosh(d(o,x)\right)^{-a} \hspace{0.02cm}\mathrm{vol}(dx).
\end{equation}
We call this distribution the Herschel-Maxwell distribution in the hyperbolic plane. It can be viewed as one of several possible generalizations of the Gaussian distribution centered at the point $o$ to the hyperbolic plane alongside the distinct distributions derived from the heat kernel interpretation of the Gaussian~\cite{heat_kernel,McKean1970} and 
\begin{equation} \label{2D hyperbolic Gauss}
     p(x)\hspace{0.02cm}\mathrm{vol}(dx) =
     \frac{1}{\pi} \sqrt{\frac{\lambda}{\pi}}\frac{e^{-\frac{1}{4\lambda}} }{\mathrm{erf}\left(\frac{1}{2\sqrt{\lambda}}\right)} e^{-\lambda \, d(o,x)^2} \hspace{0.02cm}\mathrm{vol}(dx),
     \end{equation}
which inherits an important characteristic property of Euclidean Gaussian distributions from statistical inference: the maximum likelihood estimate of the mean reduces to a (Riemannian) center of mass computation~\cite{Said2017,Said2018,HOS2022,Said2023,Chen2022}. Figure \ref{fig: Gaussian plots} compares plots of the density functions $p(r)\hspace{0.02cm}\mathrm{vol}(dx)$ in (\ref{2D hyperbolc HM}) and (\ref{2D hyperbolic Gauss}) as functions of $r=d(o,x)$ for parameters $a=4$ and $\lambda = 1.66$.

\begin{figure}
\centering
    \includegraphics[width=0.66\linewidth]{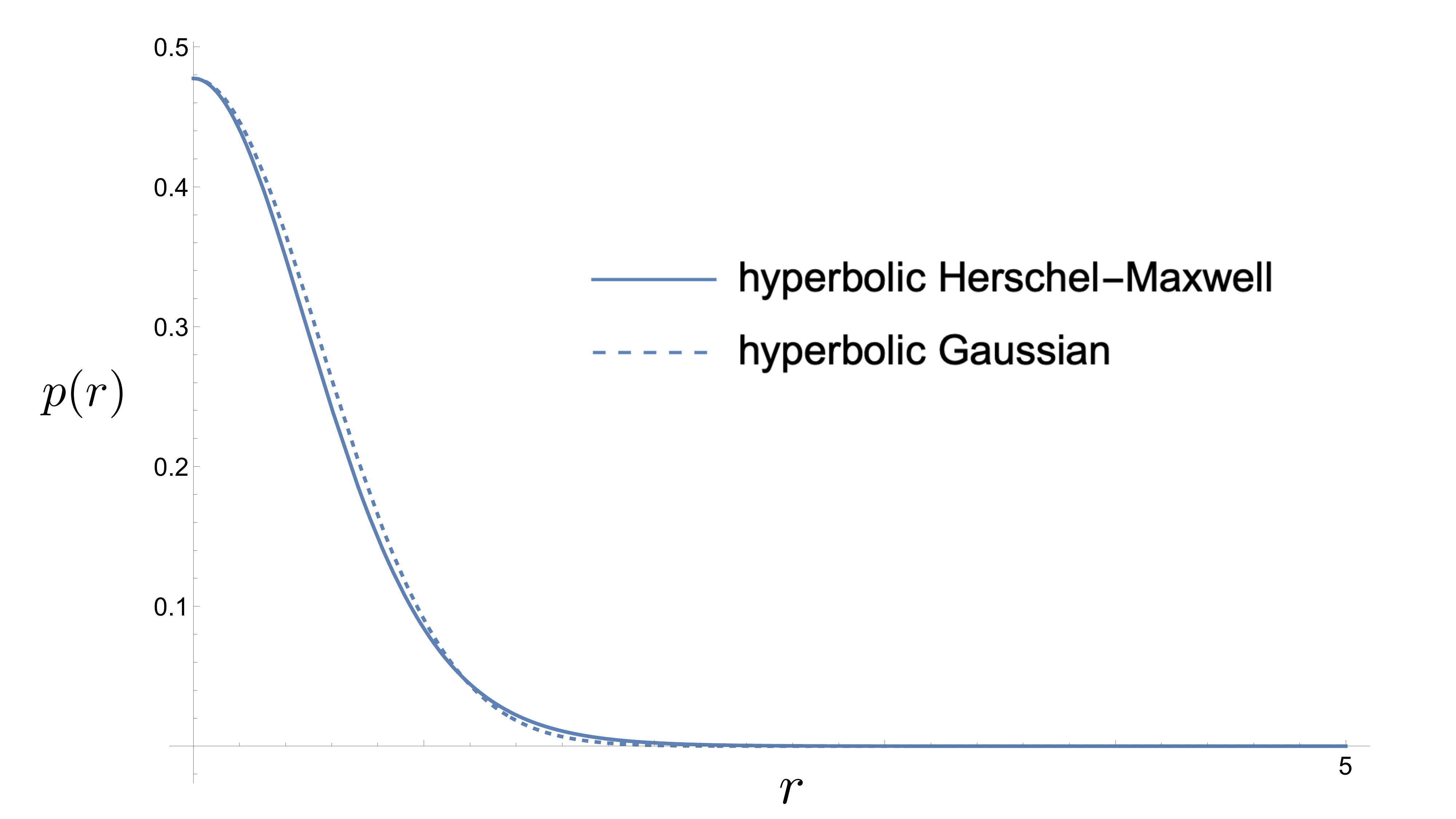}
\caption{Plots of the normalized hyperbolic Herschel-Maxwell (solid curve, (\ref{2D hyperbolc HM})) and Gaussian (dashed curve, (\ref{2D hyperbolic Gauss})) densities as functions of $r=d(o,x)$ for parameters $a=4$ and $\lambda = 1.66$.}
\label{fig: Gaussian plots}
\end{figure}

Finally, we note that the hyperbolic Herschel-Maxwell distribution can be generalized to $n$-dimensional hyperbolic space if we insist that the components of
the point $x$ in any Lobachevsky coordinate system in \emph{any} hyperbolic plane containing $o$ and $x$ are independent and the distribution only depends on the geodesic distance $r=d(o,x)$. Reasoning as before, we obtain
\begin{equation} \label{general hyperbolic HM}
   p(x)\hspace{0.02cm}\mathrm{vol}(dx) = \frac{1}{\omega_{n-1}}\hspace{0.02cm}\frac{\Gamma\left(\frac{a+1}{2}\right)}{\Gamma\left(\frac{a+1-n}{2}\right)\Gamma\left(\frac{n}{2}\right)}\left(\cosh(d(o,x)\right)^{-a} \hspace{0.02cm}\mathrm{vol}(dx),
\end{equation}
where $\omega_{n-1}$ denotes the area of the unit sphere in $\mathbb{R}^n$ and the normalization factor in (\ref{general hyperbolic HM}) is determined by integration with respect to the volume form $\mathrm{vol}(dx)$ in $n$-dimensional hyperbolic space. The distribution is well-defined for $a > n-1$.

\begin{remark}
    We can repeat the derivation of the hyperbolic Herschel-Maxwell distribution presented here in the case of spherical geometry, with the main difference being the replacement of the hyperbolic Pythagorean theorem with the spherical Pythagorean theorem
    \begin{equation}
        \cos(\theta)=\cos(u)\cos(v)
    \end{equation}
    where $o,x$ denote points on the 2-sphere $S^2$, $\theta = d(o,x)$, and $(u,v)$ is an orthogonal coordinate system centered on $o$, whereby the coordinates of a point $x$ are obtained by dropping a perpendicular to a specified geodesic through $o$ and measuring the signed lengths $u$ along the geodesic from $o$ to the foot of the perpendicular and $v$ along the perpendicular to the geodesic from $x$. This yields the spherical Herschel-Maxwell kernel
    \begin{equation} \label{spherical HM kernel}
        k(x,y)=|\cos(d(x,y))|^a,
    \end{equation}
    where $a \geq 0$ to ensure the corresponding distribution is finite. Note that by representing $x,y\in S^2$ by unit vectors $w_x, w_y\in \mathbb{R}^3$, we have $\cos(d(x,y))=\langle w_x,w_y\rangle$, and so (\ref{spherical HM kernel}) takes the form $k(x,y)=|\langle w_x,w_y\rangle|^a$. Thus, for non-negative even integer $a$, the spherical Herschel-Maxwell distribution can be identified with a homogeneous polynomial kernel $\langle x,y\rangle^n$, which is known to be positive-definite if and only if $n$ is a non-negative integer~\cite{Smola2000}.
\end{remark}
 
\section{Proofs of the main results}\label{sec:proofs}

\subsection{Proof of Theorem \ref{prop:l2god}} \label{sec:prop_l2god}

\subsubsection{Notation} \label{subsec:notation} The following notation will be used in the proof.
\begin{itemize}
\item Let $W$ denote the Weyl group, which arises from the adjoint action of $H$ on $\mathfrak{a}$~\cite{helgason1}. Also, let $\alpha$ denote the measure on $\mathfrak{a}^*$, which is given by $\alpha(dt) = c_{\scriptscriptstyle M}|c(t)|^{-2}dt$. Then, $L^2_W(\mathfrak{a}^*)$ stands for the space of $W$-invariant square-integrable functions (with respect to the measure $\alpha$).

\item Denote $\Phi_*$ the set of functions $\varphi^{}_t$ in (\ref{eq:spherical_nc}) where $t \in \mathfrak{a}^*$. This is a proper subset of the set $\Phi$ of all positive-definite spherical functions, which appears in Godement's theorem (Section~\ref{sec:godement}, in particular (\ref{eq:godement})). 

\item Functions or measures on $\mathfrak{a}^*$ can be pulled back to $\Phi_*$, assuming they are $W$-invariant. This is because $\varphi^{}_{t^\prime} = \varphi^{}_t$ if and only if $t^\prime = w\cdot t$ for some $w \in W$. Accordingly, the measure $\alpha$ on $\mathfrak{a}^*$ can be identified with a measure on $\Phi_*$\hspace{0.03cm}, also denoted $\alpha$. Similarly, if $\hat{f}(t)$ is a spherical transform, as in (\ref{eq:sphericaltransform}), $\hat{f}(\varphi)$ is just $\hat{f}(t)$ when $\varphi \in \Phi_*$ and $\varphi = \varphi^{}_t$ for some $t \in \mathfrak{a}^*$.

\item Finally, $C^{\hspace{0.03cm}c}_H(M)$ is the space of $H$-invariant continuous, compactly supported functions, with complex values, defined on $M$. 
\end{itemize}

\subsubsection{The if part} \label{subsec:ifpart} For square-integrable functions, the spherical transform is only defined in an abstract sense. It is a linear isometry between the Hilbert spaces $L^2_H(M)$ and $L^2_W(\mathfrak{a}^*)$, $f \mapsto \hat{f}$~\cite{helgason2}\cite{terras2}. However, if $\hat{f}$ is integrable, the inversion formula (\ref{eq:inverse_sphericaltransform}) holds
\begin{equation} \label{eq:inversion}
f(x) = \int_{\mathfrak{a}^*} \hat{f}(t)\varphi^{}_t(x)\hspace{0.02cm}\alpha(dt) \hspace{0.5cm} \text{for all $x \in M$}
\end{equation}
and this implies that $f$ is continuous, since each $\varphi^{}_t$ is continuous and verifies $|\varphi^{}_t(x)| \leq 1$, being a positive-definite function with $\varphi^{}_t(o) = 1$. The fact that $f$ is positive-definite follows from Godement's theorem. Indeed, (\ref{eq:inversion}) can be written
\begin{equation} \label{eq:inversiontogod}
  f(x) = \int_{\Phi_*} \varphi(x)(\hat{f}\alpha)(d\varphi)
\end{equation}
where $(\hat{f}\alpha)(d\varphi) = \hat{f}(\varphi)\alpha(d\varphi)$. Then, since $\hat{f}(\varphi) \geq 0$ for almost all $\varphi \in \Phi_*$ and $\hat{f}$ is integrable, with respect to $\alpha$,  $\hat{f}\alpha$ is a finite positive measure on $\Phi_*\hspace{0.03cm}$. Moreover, since $\Phi_*$ is a (measurable) subset of $\Phi$, (\ref{eq:inversiontogod}) is exactly of the form (\ref{eq:godement}), but with the measure $\mu_f$ supported on $\Phi_*\hspace{0.03cm}$, where it is equal to $\hat{f}\alpha$. This shows that $f$ is positive-definite, and even determines the finite positive measure $\mu_f$ (recall that $\mu_f$ is unique).

\subsubsection{The only-if part} \label{subsec:onlyif} If $f$ is $H$-invariant, continuous and positive-definite, then one has
\begin{equation} \label{eq:generalpd}
  \int_M (\psi*\psi^{\scriptscriptstyle \dagger})(x)f(x)\hspace{0.02cm}\mathrm{vol}(dx) \geq 0
\end{equation}
for any $\psi \in C^{\hspace{0.03cm}c}_H(M)$, where $*$ denotes the convolution and $\psi^{\scriptscriptstyle \dagger}(g\cdot o) = \bar{\psi}(g^{-1}\cdot o)$~\cite{dixmier} (Page 286). This yields a scalar product on $C^{\hspace{0.03cm}c}_H(M)$,
\begin{equation} \label{eq:planch1}
  \langle \psi^{\!}_{\scriptscriptstyle 1},\psi^{\!}_{\scriptscriptstyle 2}\rangle_f =    \int_M (\psi^{\,}_{\scriptscriptstyle 1}\!*\psi^{\scriptscriptstyle \dagger}_{\scriptscriptstyle 2})(x)f(x)\hspace{0.02cm}\mathrm{vol}(dx).
\end{equation}
It will shortly be proved that this scalar product admits a spectral representation,
\begin{equation} \label{eq:planch2}
\left\langle \psi_{\scriptscriptstyle 1},\psi_{\scriptscriptstyle 2}\right\rangle_f = \langle \hat{\psi}_{\scriptscriptstyle 1},\hat{\psi}_{\scriptscriptstyle 2}\rangle_{\hat{f}}
\end{equation}
where the scalar product on the right-hand side is the usual ``$L^2\hspace{0.03cm}$" scalar product, with respect to the measure $\hat{f}\alpha$ on $\Phi_*$. By~\cite{godement} (Plancherel theorem, Page 101), this implies the measure $\mu_f$ (in (\ref{eq:godement})) is supported on $\Phi_*$, where it is equal to $\hat{f}\alpha$. Since $\mu_f$ is a finite positive measure, $\hat{f}(t) \geq 0$ for almost all $t$ and $\hat{f}$ is integrable. 

To prove (\ref{eq:planch2}), consider the left-hand side of (\ref{eq:generalpd}). Both $\psi$ and $\psi^{\scriptscriptstyle \dagger}$ belong to $C^{\hspace{0.03cm}c}_H(M)$.\hfill\linebreak The same is therefore true of their convolution. Since the spherical transform of $\psi^{\scriptscriptstyle \dagger}$ is the complex conjugate of the spherical transform of $\psi$, 
the convolution property of spherical transforms (see~\cite{terras2} (Page 88)) implies that
\begin{equation} \label{eq:convtheorem}
  \widehat{\psi*\psi^{\scriptscriptstyle \dagger}}(t) = |\hat{\psi}(t)|^2.
\end{equation}
Since both $f$ and $\psi*\psi^{\scriptscriptstyle \dagger}$ are in $L^2_H(M)$, the Plancherel formula (\ref{eq:plancherel}) yields
\begin{equation} \label{eq:planch25}
\int_M (\psi*\psi^{\scriptscriptstyle \dagger})(x)f(x)\hspace{0.02cm}\mathrm{vol}(dx) = 
\int_{\mathfrak{a}^*} |\hat{\psi}(t)|^2\hat{f}(t)\hspace{0.02cm}\alpha(dt)
= \int_{\mathfrak{a}^*} |\hat{\psi}(t)|^2\hspace{0.02cm}(\hat{f}\alpha)(dt).
\end{equation}
By (\ref{eq:planch1}), and by pulling back the integral of the right-hand side to $\Phi_*\hspace{0.03cm}$, this is
\begin{equation} \label{eq:planch3}
  \left\langle \psi,\psi\right\rangle_f = \int_{\Phi_*} |\hat{\psi}(\varphi)|^2\hspace{0.03cm}(\hat{f}\alpha)(d\varphi) = 
\langle \hat{\psi},\hat{\psi}\rangle_{\hat{f}}
\end{equation}
so that (\ref{eq:planch2}) follows immediately, using the polarization identity for scalar products. 
\hfill \proofbox

\subsubsection{Alternative proof of Theorem \ref{prop:l2god}} \label{subsec:alternative} The only-if part of the proposition admits at least one more proof. Indeed~\cite{dixmier} (Page 302), for any continuous positive-definite $f \in L^2_H(M)$, it is possible to construct $\phi \in L^2_H(M)$ such that $f = \phi * \phi^{\scriptscriptstyle \dagger}$. If one can apply the convolution property as in (\ref{eq:convtheorem}), it would follow that $\hat{f}(t) = |\hat{\phi}(t)|^2$, so $\hat{f}(t)$ is positive for all $t$ (not just for almost all $t$), and $\hat{f}$ is integrable.

The convolution property does not apply directly because $\phi$ is not integrable. Letting $(\phi_n;n = 1,2,\ldots)$ belong to $C^{\hspace{0.03cm}c}_H(M)$ and converge to $\phi$ in $L^2_H(M)$, the functions $f_n = \phi_n * \tilde{\phi}_n$\hfill\linebreak belong to $C^{\hspace{0.03cm}c}_H(M)$ and converge uniformly to $f$, as can be shown by using Young's convolution inequality~\cite{hewittross} (Page 296). 

Now, as in (\ref{eq:convtheorem}), one has $\hat{f}_n(t) = |\hat{\phi}_n(t)|^2$ for each $n$. However, since $\hat{\phi}_n$ converges to $\hat{\phi}$ 
in~$L^2_W(\mathfrak{a}^*)$, it follows that $|\hat{\phi}_n|^2$ converges to $|\hat{\phi}|^2$ in $L^1_W(\mathfrak{a}^*)$ (this is the space of $W$-invariant integrable functions)~\cite{bogachev} (Page 298). 

Therefore, a representation of $f$ under the form (\ref{eq:inverse_sphericaltransform}) can be obtained by noting, for $x \in M$,
$$
\lim f_n(x) = \lim\int_{\mathfrak{a}^*}\varphi^{}_t(x)|\hat{\phi}_n(t)|^2\hspace{0.02cm}\alpha(dt) = \int_{\mathfrak{a}^*}\varphi^{}_t(x)|\hat{\phi}(t)|^2\hspace{0.02cm}\alpha(dt)
$$
where the first equality follows from (\ref{eq:inverse_sphericaltransform}) and $\hat{f}_n(t) = |\hat{\phi}_n(t)|^2$, and the second equality from
 $|\hat{\phi}_n|^2 \longrightarrow |\hat{\phi}|^2$ in $L^1_W(\mathfrak{a}^*)$ and $|\varphi^{}_t(x)| \leq 1$ (this is a consequence of $\varphi^{}_t$ being positive-definite).

However, the limit of $f_n(x)$ is just $f(x)$, since the $f_n$ converge uniformly to $f$. Thus,
$$
f(x) = \int_{\mathfrak{a}^*}\varphi^{}_t(x)|\hat{\phi}(t)|^2\hspace{0.02cm}\alpha(dt).
$$ 
Comparing to (\ref{eq:inverse_sphericaltransform}), it follows that $\hat{f}(t) = |\hat{\phi}(t)|^2$, as required. 
\hfill \proofbox
%the above proof of the only if part does not really explain why $\hat{f}$ should be positive. Here is a brief sketch of a direct proof of this fact. From (\ref{eq:generalpd}) and (\ref{eq:planch25}),
%\begin{equation} \label{eq:generalpd_proof}
%   \int_{\mathfrak{a}^*} |\hat{\psi}(t)|^2\hspace{0.02cm}(\hat{f}\alpha)(dt) \geq 0
%\end{equation}
%for any $\psi \in C^{\hspace{0.03cm}c}_H(M)$. By a density argument, (\ref{eq:generalpd_proof}) will also hold when $\psi$ is any $H$-invariant Schwartz function on $M$. However, any $W$-invariant Schwartz function $g$ on $\mathfrak{a}^*$ is of the form $g(t) = \hat{\psi}(t)$~\cite{helgason2} (Page 489). In particular, if $g$ is $W$-invariant, smooth and compactly supported,
%\begin{equation} \label{eq:alternative1}
%   \int_{\mathfrak{a}^*} |g(t)|^2\hspace{0.02cm}(\hat{f}\alpha)(dt) \geq 0
%\end{equation}
%To conclude, let $\mathfrak{a}^*_+$ be a positive Weyl chamber in $\mathfrak{a}^*$~\cite{helgason1}. Let $b$ be a smooth function with compact support conntained in $\mathfrak{a}^*_+$, and $g(t) = \sum_w b(w\cdot t)$ where the sum is over $w \in W$ (Weyl group). Since $\hat{f}\alpha$ is $W$-invariant,
%$$
%\int_{\mathfrak{a}^*} |g(t)|^2\hspace{0.02cm}(\hat{f}\alpha)(dt) = |W| \int_{\mathfrak{a}^*_+} |b(t)|^2\hspace{0.02cm}(\hat{f}\alpha)(dt)
%$$
%where $|W|$ is the order of the Weyl group. Thus, (\ref{eq:alternative1}) implies the measure $\hat{f}\alpha$ is positive against $|b|^2$ for any $b$ as above. 
%However,

%\section{Proofs for Paragraph \ref{sec:gauss_h2}} \label{sec:proofs_gauss_h2}

\subsection{Proof of Proposition \ref{prop:h2_largelambda}} \label{sec:proof_largelambda}
Let $R$ be such that $r \leq R$ implies $|\cosh(r) - 1| < 1$. Then~\cite{terras1} (Page 142)
\begin{equation} \label{eq:legendre_taylor}
  P_{-\frac{1}{2}+\mathrm{i}t}(\cosh(r)) = 1 - \left(1/4 + t^2\right)\!(r^2/4) + \varepsilon(t,r) \hspace{0.5cm} \text{for $r \leq R$}
\end{equation}
with remainder $|\varepsilon(t,r)| \leq C(T,R)r^4$ for all $t \leq T$ and $r \leq R$, where $C(T,R) > 0$ is a constant. Replacing this into (\ref{eq:h2_sphericaltransform}), the following decomposition is obtained,
\begin{equation} \label{eq:hatf_decomposition}
  \hat{f}(t) = \hat{f}_a(t) + \hat{f}_b(t) + \hat{f}_c(t)
\end{equation}
in terms of the following integrals
\begin{align}
\label{eq:hatf_decomposition_a} & \hat{f}_a(t) = \int^R_0 e^{-\lambda r^2}\left[ 1 - \left(1/4 + t^2\right)\!(r^2/4)\hspace{0.02cm}\right]\sinh(r)dr,\\[0.2cm]
\label{eq:hatf_decomposition_b} & \hat{f}_b(t) = \int^R_0 e^{-\lambda r^2}\varepsilon(t,r)\sinh(r)dr,\\[0.2cm]
\label{eq:hatf_decomposition_c} & \hat{f}_c(t) = \int^\infty_R e^{-\lambda r^2}  P_{-\frac{1}{2}+\mathrm{i}t}(\cosh(r))\sinh(r)dr.
\end{align}
Noting that $|P_{-\frac{1}{2}+\mathrm{i}t}(\cosh(r))| \leq 1$~\cite{terras1} (Page 161), a simple calculation yields
\begin{equation} \label{eq:hatfc_bound}
 |\hat{f}_c(t)| \,\leq \int^\infty_R e^{-\lambda r^2} \sinh(r)dr = O\left(e^{-\lambda R^2}\right).
\end{equation}
On the other hand, 
$$
\left(C(T,R)\right)^{-1}|\hat{f}_b(t)| \,\leq \int^R_0 e^{-\lambda r^2} r^4\hspace{0.02cm}\sinh(r)dr \,
\leq \int^\infty_0 e^{-\lambda r^2} r^4\hspace{0.02cm}\sinh(r)dr.
$$
However, by performing an elementary change of variables and then assuming that $\lambda \geq 1$, this implies the following bound,
$$
\left(C(T,R)\right)^{-1}|\hat{f}_b(t)| \leq 
\lambda^{-\frac{5}{2}}\,\int^\infty_0 e^{-u^2} u^4\hspace{0.02cm}\sinh(u) du
$$
which means that 
\begin{equation} \label{eq:hatfb_bound}
  |\hat{f}_b(t)| = O\left( \lambda^{-\frac{5}{2}}\right).
\end{equation}
Finally, $\hat{f}_a(t)$ can be extended to an integral from $0$ to $\infty$, with an exponentially small error,
\begin{equation} \label{eq:hatfa_bound_ZV}
  \hat{f}_a(t) = Z(\lambda) - \left(1/4 + t^2\right)V(\lambda) + O\left(e^{-\lambda R^2}\right)
\end{equation}
where $Z(\lambda)$ and $V(\lambda)$ are the following integrals
$$
Z(\lambda) = \int^\infty_0e^{-\lambda r^2}\sinh(r)dr,\hspace{0.5cm} V(\lambda) = \frac{1}{4}\hspace{0.03cm}\int^\infty_0 e^{-\lambda r^2} r^2\hspace{0.02cm}\sinh(r)dr.
$$
By a direct evaluation
\begin{equation} \label{eq:Zlamb}
Z(\lambda)  =  \frac{1}{2} \sqrt{\frac{\pi}{\lambda}} \exp\left(\frac{1}{4\lambda}\right) \mathrm{erf}\left(\frac{1}{2\sqrt{\lambda}}\right)  
= \frac{\lambda^{-1}}{2} + \frac{\lambda^{-2}}{6} + O\left(\lambda^{-3}\right)
\end{equation}
where $\mathrm{erf}$ denotes the error function~\cite{wongbeals} (Page 35), and then by noting that $4V(\lambda) = -Z^\prime(\lambda)$ (the prime denotes derivation),
\begin{equation} \label{eq:Vlamb}
V(\lambda) = \frac{\lambda^{-2}}{8} + O(\lambda^{-3}). 
\end{equation}
Therefore, replacing (\ref{eq:Zlamb}) and (\ref{eq:Vlamb}) into (\ref{eq:hatfa_bound_ZV}),
\begin{equation} \label{eq:hatfa_bound_O}
  \hat{f}_a(t) = \frac{\lambda^{-1}}{2} - \left(t^2 - \frac{13}{12}\right) \frac{\lambda^{-2}}{8} + O\left(\lambda^{-3}\right).
\end{equation}
To conclude the proof, replace (\ref{eq:hatfc_bound}), (\ref{eq:hatfb_bound}) and (\ref{eq:hatfa_bound_O}) into (\ref{eq:hatf_decomposition}). This yields
\begin{equation} \label{eq:hatf_final_bound}
\hat{f}(t) \sim \frac{\lambda^{-1}}{2} - \left(t^2 - \frac{13}{12}\right) \frac{\lambda^{-2}}{8} \hspace{0.5cm} \text{as $\lambda \rightarrow \infty$}
\end{equation}
($a \sim b$ means $a/b$ converges to $1$), where the convergence is uniform in $t \leq T$. It is now enough to note that the right-hand side of (\ref{eq:hatf_final_bound}) is positive for all $t \leq T$ when $\lambda \geq T^2/4$. \hfill \proofbox

\subsection{Proof of Lemma \ref{lem:h3_sine}} \label{sec:proof_h3_sine}
Note that $\sinh(r)\sin(tr)$ is an even function of $r$. Moreover, since $g(r)$ is also even, (\ref{eq:h3_sphericaltransform}) can be written
\begin{align*}
\hat{f}(t) & =  \frac{4\pi}{t}\int^\infty_0 g(r)\sinh(r)\hspace{0.02cm}\sin(tr)dr\\[0.2cm]
& = \frac{2\pi}{t}\int^{+\infty}_{-\infty} g(r)\sinh(r)\hspace{0.02cm}\sin(tr)dr
\end{align*}
In other words,
$$
\hat{f}(t) = \frac{2\pi}{t}\,\mathrm{Im}\left\lbrace \int^{+\infty}_{-\infty} g(r)\sinh(r)\hspace{0.02cm}e^{\mathrm{i}tr}dr\right\rbrace.
$$
Using the definition of the hyperbolic sine, this becomes
$$
\hat{f}(t) = \frac{\pi}{t}\,\mathrm{Im}\left\lbrace \int^{+\infty}_{-\infty} g(r)\hspace{0.02cm}e^{(\mathrm{i}t+1)r}dr - 
\int^{+\infty}_{-\infty} g(r)\hspace{0.02cm}e^{(\mathrm{i}t-1)r}dr\right\rbrace
$$
and then, by definition of the moment-generating function $G$,
$$
\hat{f}(t) = \frac{\pi}{t}\,\mathrm{Im}\left\lbrace G(\mathrm{i}t+1) -  
G(\mathrm{i}t-1)\right\rbrace
$$
which is the same as (\ref{eq:h3_sine}).
\hfill \proofbox

\subsection{Proof of Lemma \ref{lem:embedh2}} \label{sec:proof_embedh2}
It will be enough to prove that there exists a totally geodesic submanifold $M_o$ of $M$, such that $M_o$ is a hyperbolic plane. Indeed, $M$ is a Hadamard manifold (a simply connected complete Riemannian manifold with negative sectional curvatures~\cite{helgason1} (Chapter VI)). Therefore, if $M_o$ is a totally geodesic submanifold, it is also an isometrically embedded metric space. The proof will rely very heavily on the algebraic description of symmetric spaces, detailed in~\cite{helgason1} (in particular Chapters V and VI).

Recall that $M$ is determined by a symmetric pair $(G,H)$. The Lie algebra $\mathfrak{g}$ of $G$ is a real semisimple Lie algebra, and admits a Cartan involution $\theta$ with corresponding Cartan decomposition $\mathfrak{g} = \mathfrak{h} + \mathfrak{p}$ ($\mathfrak{h}$ is the Lie algebra of $H$). If $B$ is the Killing form of $\mathfrak{g}\hspace{0.03cm}$, then 
\begin{equation} \label{eq:nc_scalarprod}
\langle \xi\hspace{0.02cm},\eta\rangle = -B\!\left(\xi,\theta(\eta)\right) \hspace{0.5cm} \xi,\eta \in \mathfrak{g}
\end{equation}
defines a scalar product on $\mathfrak{g}$. 
Let $\mathfrak{a}$ be a maximal Abelian subspace of $\mathfrak{p}$. The linear maps $\mathrm{ad}_a:\mathfrak{g} \rightarrow \mathfrak{g}$ where $\mathrm{ad}_a(\xi) = [a,\xi]$ for $a \in \mathfrak{a}$ are jointly diagonalisable, with real eigenvalues. These eigenvalues are linear forms $\lambda : \mathfrak{a} \rightarrow \mathbb{R}$,  called the roots of $\mathfrak{g}$ with respect to $\mathfrak{a}$. 

Fix a non-zero root $\lambda$, and $a_\lambda \in \mathfrak{a}$ such that $\lambda(a) = B(a_\lambda\hspace{0.02cm},a)$ for $a \in \mathfrak{a}$. Then, let $\xi_\lambda$ be a joint eigenvector, $\mathrm{ad}_a(\xi_\lambda)  = \lambda(a)\hspace{0.02cm}\xi_\lambda$ for $a \in \mathfrak{a}$. Note that
\begin{equation} \label{eq:theta_xi_1}
  \left[a,\theta(\xi_\lambda)\right] = \theta\left(\left[\theta(a),\xi_\lambda\right]\right) = - \theta\left(\left[a,\xi_\lambda\right]\right)
\end{equation} 
for any $a \in \mathfrak{a}$, where the first equality follows because $\theta$ is an involution, and the second because $a \in \mathfrak{a}$, a subspace of $\mathfrak{p}$, and $\mathfrak{p}$ is by definition the $(-1)$-eigenspace of $\theta$. However,
\begin{equation} \label{eq:theta_xi_2}
\theta\left(\left[a,\xi_\lambda\right]\right) = \lambda(a)\hspace{0.02cm}\theta(\xi_\lambda)
\end{equation}
by the definition of $\xi_\lambda$ and linearity of $\theta$. Thus, (\ref{eq:theta_xi_1}) and (\ref{eq:theta_xi_2}) imply that 
$\mathrm{ad}_a(\theta(\xi_\lambda)) = -\lambda(a)\hspace{0.02cm}\theta(\xi_\lambda)$ for any $a \in \mathfrak{a}$. Denoting $\theta(\xi_\lambda)$ by $\xi_{-\lambda}$ and recalling the definition of $a_\lambda \in \mathfrak{a}$, it is now seen that the following commutation relations hold 
\begin{equation} \label{eq:commutation1}
  [a_\lambda,\xi_\lambda] = \Vert a_\lambda \Vert^2\hspace{0.02cm} \xi_\lambda \hspace{1cm}
  [a_\lambda,\xi_{-\lambda}] = -\Vert a_\lambda \Vert^2\hspace{0.02cm} \xi_{-\lambda}
\end{equation}
where $\Vert \cdot \Vert$ is the norm with respect to the scalar product (\ref{eq:nc_scalarprod}). It will now be proved that 
\begin{equation} \label{eq:commutation2}
  \left[\xi_\lambda\hspace{0.02cm},\xi_{-\lambda}\right] = -\Vert \xi_\lambda\Vert^2\hspace{0.02cm}a_\lambda.
\end{equation}
First, any $a \in \mathfrak{a}$ commutes with $\left[\xi_\lambda\hspace{0.02cm},\xi_{-\lambda}\right]$. Indeed, 
\begin{align*}
\left[a\hspace{0.02cm},\left[\xi_{\lambda}\hspace{0.02cm},\xi_{-\lambda}\right]\right] &= 
\left[\left[a\hspace{0.02cm},\xi_{\lambda}\right]\!,\xi_{-\lambda}\right] + \left[\xi_{\lambda}\hspace{0.02cm},\left[a\hspace{0.02cm},\xi_{-\lambda}\right]\right] \\[0.2cm]
&= \lambda(a)[\xi_\lambda\hspace{0.02cm},\xi_{-\lambda}] - \lambda(a)[\xi_\lambda\hspace{0.02cm},\xi_{-\lambda}]
\end{align*}
where the first equality follows from Jacobi's identity for Lie brackets, and the second because $\xi_\lambda$ and $\xi_{-\lambda}$ are joint eigenvectors with eigenvalues $\lambda(a)$ and $-\lambda(a)$. Second, $\left[\xi_\lambda\hspace{0.02cm},\xi_{-\lambda}\right] \in \mathfrak{p}$. Indeed, $\theta(\left[\xi_\lambda\hspace{0.02cm},\xi_{-\lambda}\right]) = \left[\theta(\xi_\lambda)\hspace{0.02cm},\theta(\xi_{-\lambda})\right]$, but $\theta(\xi_\lambda) = \xi_{-\lambda}$ by definition, and $\theta(\xi_{-\lambda}) = \xi_\lambda$ since $\theta$ is an involution. Therefore, 
$\theta(\left[\xi_\lambda\hspace{0.02cm},\xi_{-\lambda}\right]) = 
-\left[\xi_\lambda\hspace{0.02cm},\xi_{-\lambda}\right]$ which means $\left[\xi_\lambda\hspace{0.02cm},\xi_{-\lambda}\right] \in \mathfrak{p}$. However, $\mathfrak{a}$ is maximal Abelian in $\mathfrak{p}$, and this yields $\left[\xi_\lambda\hspace{0.02cm},\xi_{-\lambda}\right] \in \mathfrak{a}$. 

Finally, to obtain (\ref{eq:commutation2}), note that
$$
B\!\left(\left[\xi_\lambda\hspace{0.02cm},\xi_{-\lambda}\right]\!,a\right) = 
B\!\left(\xi_{-\lambda}\hspace{0.02cm},\left[a\hspace{0.02cm},\xi_\lambda\right]\right) = B(\xi_{-\lambda}\hspace{0.02cm},\xi_\lambda)\hspace{0.02cm}\lambda(a)
$$
where the first equality holds since $B$ is $\mathrm{ad}$-invariant, and the second since $\xi_\lambda$ is a joint eigenvector with eigenvalue $\lambda(a)$. Now, the fact that $B$ is symmetric and the definition of $a_\lambda$ imply $B\!\left(\left[\xi_\lambda\hspace{0.02cm},\xi_{-\lambda}\right]\!,a\right) = B\!\left(\xi_{\lambda}\hspace{0.02cm},\xi_{-\lambda}\right)B\!\left(a_\lambda\hspace{0.02cm},a\right)$, so that
$$
\left[\xi_\lambda\hspace{0.02cm},\xi_{-\lambda}\right] = B\!\left(\xi_{\lambda}\hspace{0.02cm},\xi_{-\lambda}\right)a_\lambda
$$
and (\ref{eq:commutation2}) follows after recalling (\ref{eq:nc_scalarprod}) and $\xi_{-\lambda} = \theta(\xi_\lambda)$.

Moving on, let $\mathfrak{g}_{\hspace{0.02cm} o}$ denote the Lie subalgebra of $\mathfrak{g}$, generated by $\lbrace \xi_{-\lambda}, a_\lambda,\xi_\lambda\rbrace$. The Lie bracket of $\mathfrak{g}_{\hspace{0.02cm} o}$ is completely determined by (\ref{eq:commutation1}) and (\ref{eq:commutation2}). It is straightforward to see that $\mathfrak{g}_{\hspace{0.02cm} o} \simeq \mathfrak{sl}(2,\mathbb{R})$ (the Lie algebra of $2 \times 2$ zero-trace real matrices). The isomorphism is given by
\begin{equation} \label{eq:sl2r_isomorphism}
\xi_{-\lambda} \longleftrightarrow \left(\begin{array}{cc}  0&0 \\[0.1cm] -1 &0 \end{array}\right) \hspace{0.2cm};\hspace{0.2cm}
a_{\lambda} \longleftrightarrow \left(\begin{array}{cc} 1 & 0\\[0.1cm]  0& -1\end{array}\right) \hspace{0.2cm};\hspace{0.2cm}
\xi_\lambda \longleftrightarrow \left(\begin{array}{cc}0  & 1 \\[0.1cm] 0 & 0\end{array}\right) 
\end{equation}
after proper rescaling of $\lbrace \xi_{-\lambda},a_\lambda,x_{-\lambda}\rbrace$ (this needs to ensure that $\Vert a_\lambda \Vert^2 = 2$ and $\Vert\xi_\lambda\Vert^2 = 1$). Let $G_o$ be the connected subgroup of $G$ with Lie algebra $\mathfrak{g}_{\hspace{0.02cm} o}$ and let $M_o$ be the orbit of $o$ under $G_o$ (recall that $o \in M$ has stabiliser $H$ in $G$ and can be thought of as the ``\,point of origin\,"). 
Then, $M_o$ is a totally geodesic submanifold of $M$, because $\mathfrak{g}_{\hspace{0.02cm} o}$ is a Lie subalgebra of $\mathfrak{g}$~\cite{helgason1} (Page 189). Moreover, $M_o$ is a Riemannian homogeneous space under the action of $G_o\hspace{0.02cm}$, $M_o \simeq G_o/H_o$ where $H_o = G_o \cap H$ is the stabiliser of $o$ in $G_o\hspace{0.02cm}$. To see that $M_o$ is a hyperbolic plane (up to an irrelevant scaling factor), note first that $G_o \simeq SL(2,\mathbb{R})$. It then remains to check tha $H_o \simeq SO(2)$, since the Riemannian homogeneous space $SL(2,\mathbb{R})/SO(2)$ is just a hyperbolic plane~\cite{terras1}. To do so, let $\mathfrak{h}_o$ be the Lie algebra of $H_o\hspace{0.03cm}$. Then, $\mathfrak{h}_o = \mathfrak{g}_{\hspace{0.02cm} o} \cap \mathfrak{h}$. However, since $\mathfrak{h}$ is the $(+1)$-eigenspace of $\theta$, it is easy to see that $\mathfrak{h}_o$ is generated by $\tau = \xi_\lambda + \xi_{-\lambda}\hspace{0.02cm}$. Since $\tau$ corresponds to the generator of $SO(2) \subset SL(2,\mathbb{R})$, under the isomorphism (\ref{eq:sl2r_isomorphism}),  
$H_o \simeq SO(2)$, as required. 
\hfill \proofbox

\section*{Acknowledgments}

The authors acknowledge financial support from the School of Physical and Mathematical Sciences, the Talent Recruitment and Career Support (TRACS) Office, and the Presidential Postdoctoral Programme at Nanyang Technological University (NTU).

\bibliographystyle{siamplain}
\bibliography{references}

\newpage

\appendix

\section{L$^\textbf{\!1}$-Godement theorem} \label{sec:l1godement}
The L$^{\!\scriptscriptstyle 2}$-Godement theorem, stated back in Section \ref{sec:l2godement}, has the following variant, which will be called the L$^{\!\scriptscriptstyle 1}$-$\hspace{0.02cm}$Godement theorem. It seems useful to have this at hand, to deal with kernels $k$ such that the function $f(x) = k(o,x)$\hfill\linebreak is integrable but not square-integrable. 
\begin{theorem}[L$^\textbf{\!1}$-$\hspace{0.02cm}$Godement Theorem]
  A continuous function $f \in L^1_H(M)$ is positive-definite if and only if $\hat{f}(t) \geq 0$ for all $t \in \mathfrak{a}^*$ and $\hat{f}$ is integrable with respect to the measure $|c(t)|^{-2}dt$.
\end{theorem}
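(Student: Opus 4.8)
The plan is to reduce the $L^1$ statement to the already-established $L^2$-Godement theorem (Theorem \ref{prop:l2god}), exploiting the fact that a continuous positive-definite function is automatically bounded. Concretely, writing $k = k_f$ with $f(x) = k(o,x)$, the positive-definiteness condition (\ref{eq:kpd}) applied to the two points $o$ and $x$ shows that the Hermitian matrix with entries $k(o,o)$, $k(o,x)$, $k(x,o)$, $k(x,x)$ is positive semidefinite, hence $|k(o,x)|^2 \le k(o,o)\,k(x,x)$. Since $G$ acts transitively and $k$ is $G$-invariant (\ref{eq:kinv}), $k(x,x) = k(o,o) = f(o)$ for every $x$, so that $|f(x)| \le f(o)$ for all $x \in M$. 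Thus any continuous positive-definite $f$ is bounded by $f(o)$.

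For the only-if direction, suppose $f \in L^1_H(M)$ is continuous and positive-definite. By the boundedness just obtained, $\int_M |f|^2\,\mathrm{vol}(dx) \le f(o)\int_M|f|\,\mathrm{vol}(dx) < \infty$, so $f$ belongs to $L^2_H(M)$ as well. The $L^2$-Godement theorem (Theorem \ref{prop:l2god}) then gives that $\hat f(t) \ge 0$ for almost every $t$ and that $\hat f$ is integrable with respect to $|c(t)|^{-2}dt$; the integrability assertion is already exactly what is required. It remains only to upgrade ``almost every $t$'' to ``every $t$'', and this is where the $L^1$ hypothesis enters: because $|\varphi_{-t}(x)| \le 1$ (each $\varphi_t$ being positive-definite with $\varphi_t(o)=1$) and the spherical functions depend continuously on $t$, dominated convergence applied to (\ref{eq:sphericaltransform}) with the integrable majorant $|f|$ shows that $\hat f$ is a continuous function on $\mathfrak a^*$. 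A continuous function that is nonnegative almost everywhere is nonnegative everywhere, giving $\hat f(t) \ge 0$ for all $t$.

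For the if direction, the argument of the $L^2$ case carries over. Assuming $\hat f \ge 0$ everywhere and integrable against $|c(t)|^{-2}dt$, the inversion formula (\ref{eq:inverse_sphericaltransform}) expresses $f$ as $f(x) = \int_{\Phi_*}\varphi(x)\,(\hat f\alpha)(d\varphi)$, where $\hat f\alpha$ is a finite positive measure supported on the subset $\Phi_*\subseteq\Phi$. Since this is of the form (\ref{eq:godement}), Godement's theorem shows $f$ is positive-definite.

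The main obstacle is not conceptual but a matter of matching conventions: one must check that the concrete $L^1$ spherical transform (the honest integral (\ref{eq:sphericaltransform}), which makes $\hat f$ a bona fide continuous function) agrees with the abstract $L^2$ transform used in Theorem \ref{prop:l2god} on the overlap $L^1_H(M)\cap L^2_H(M)$, and, for the if direction, that the inversion formula (\ref{eq:inverse_sphericaltransform}) is valid for an $L^1$ function whose transform is integrable. Both are standard facts of spherical harmonic analysis --- the analogues of the consistency and $L^1$-inversion theorems for the Euclidean Fourier transform --- obtainable by approximating $f$ by elements of $C^c_H(M)$ and passing to the limit. The genuinely new input beyond the $L^2$ theorem is the boundedness estimate $|f| \le f(o)$ together with the continuity of $\hat f$, which together let one strengthen almost-everywhere positivity to positivity at every point.
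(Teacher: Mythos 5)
Your only-if direction takes a genuinely different route from the paper's, and it works. The paper re-derives the Plancherel-type identity $\int_M(\psi*\psi^{\dagger})(x)f(x)\,\mathrm{vol}(dx)=\int_{\mathfrak a^*}|\hat\psi(t)|^2\hat f(t)\,\alpha(dt)$ directly for $f\in L^1_H(M)$ (by expanding $\psi*\psi^{\dagger}$ through the inversion formula and using the definition (\ref{eq:sphericaltransform}) of $\hat f$), and then invokes Godement's Plancherel theorem exactly as in the $L^2$ case. Your reduction --- the bound $|f|\le f(o)$ forces a continuous positive-definite $L^1$ function into $L^2_H(M)$, so Theorem \ref{prop:l2god} applies --- is shorter, and your observation that $\hat f$ is continuous for $f\in L^1_H(M)$ is precisely what upgrades ``for almost all $t$'' to ``for all $t$''; the paper's own only-if argument leaves that last step implicit, so this is a genuine value-add. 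The consistency of the abstract $L^2$ transform with the integral (\ref{eq:sphericaltransform}) on $L^1\cap L^2$, which you flag, is indeed standard.

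The gap is in your if direction. Everything there hinges on the validity of the inversion formula (\ref{eq:inverse_sphericaltransform}) for a continuous $f\in L^1_H(M)$ whose transform is integrable, and this is exactly the nontrivial content of the paper's proof; it cannot be obtained by ``approximating $f$ by elements of $C^c_H(M)$ and passing to the limit'' as you suggest. If $\psi_n\to f$ in $L^1_H(M)$, then $\hat\psi_n\to\hat f$ only uniformly (the sup-norm of the transform is controlled by the $L^1$ norm), and uniform convergence does not permit passing to the limit in $\int_{\mathfrak a^*}\hat\psi_n(t)\varphi_t(x)\,\alpha(dt)$, because $\alpha(dt)=c_M|c(t)|^{-2}dt$ is an infinite measure on $\mathfrak a^*$. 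The paper instead mollifies with the heat kernel: $f_s=G_s*f\to f$ in $L^1_H(M)$ by strong continuity of the heat semigroup, the explicit transform $\hat G_s(t)=Z(s)^{-1}\exp[-s(\Vert t\Vert^2+\Vert\rho\Vert^2)]$ in (\ref{eq:heat}) supplies a controllable factor, the eigenfunction property $f*\varphi_t=\hat f(t)\varphi_t$ turns $f_s(x)$ into $\int_{\mathfrak a^*}\varphi_t(x)\hat f(t)\hat G_s(t)\,\alpha(dt)$ as in (\ref{eq:preinversion_l1}), and dominated convergence (using the hypothesis $\hat f\in L^1(\alpha)$) yields the inversion formula as $s\to 0$. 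Some argument of this type --- an approximate identity whose spherical transform you control explicitly --- is required; as written, your if direction assumes the key lemma rather than proving it.
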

Here, $L^1_H(M)$ denotes the space of $H$-invariant integrable functions (with respect to $\mathrm{vol}$). The following proof is modeled on the one in Section \ref{sec:prop_l2god}. In particular, the notation is the same as in Paragraph \ref{subsec:notation}.
\subsection{The if part} It will be enough to prove that the inversion formula (\ref{eq:inverse_sphericaltransform}) holds
\begin{equation} \label{eq:inversion_l1}
f(x) = \int_{\mathfrak{a}^*} \varphi^{}_t(x)\hat{f}(t)\hspace{0.02cm}\alpha(dt) \hspace{0.5cm} \text{for all $x \in M$}.
\end{equation}
The fact that $f$ is continuous and positive-definite then follows exactly as in Paragraph \ref{subsec:ifpart}. 
Denote by $G_s(x) = G_s(o,x)$ the heat kernel of $M$. For $s > 0$, $G_s$ has spherical transform~\cite{gangolli}
\begin{equation} \label{eq:heat}
  \hat{G}_s(t) = \left(Z(s)\right)^{-1}\exp\left[-s\left(\Vert t\Vert^2 + \Vert \rho \Vert^2\right) \right]
\end{equation}
where $Z(s)$ is a normalizing constant, which guarantees that $G_s$ integrates to $1$ over $M$, and where $\rho$ is half the sum of positive roots (as in Formula (\ref{eq:spherical_nc})). Recall that the heat semigroup is strongly continuous on $L^1_H(M)$~\cite{davies} (Page 148). This means the convolutions $f_s = G_s * f$ converge to $f$ in $L^1_H(M)$ as $s \rightarrow 0$. 
Of course, $G_s$ satisfies the inversion formula (\ref{eq:inverse_sphericaltransform}), 
$$
G_s(x) = \int_{\mathfrak{a}^*} \varphi_t(x)\hspace{0.03cm}\hat{G}_s(t)\hspace{0.02cm}\alpha(dt).
$$
Therefore, by taking the convolution under the integral
\begin{equation} \label{eq:preinversion_l1}
 f_s(x) = \int_{\mathfrak{a}^*} (f*\varphi^{}_t)(x)\hspace{0.03cm}\hat{G}_s(t)\hspace{0.02cm}\alpha(dt) = 
\int_{\mathfrak{a}^*} \varphi^{}_t(x)\hat{f}(t)\hspace{0.03cm}\hat{G}_s(t)\hspace{0.02cm}\alpha(dt)
\end{equation}
where the second equality follows because $\varphi^{}_t$ is an eigenfunction of convolution operators~\cite{terras2} (Page 67).
Now, from the assumption that $\hat{f}$ is integrable (\textit{i.e.} $\hat{f} \in L^1_W(\mathfrak{a}^*)$), and from (\ref{eq:heat}), the dominated convergence theorem can be applied in (\ref{eq:preinversion_l1}), in order to obtain
$$
\lim f_s(x) = \int_{\mathfrak{a}^*}\varphi^{}_t(x)\hat{f}(t)\hspace{0.02cm}\alpha(dt) 
$$
as $s \rightarrow 0$. But this limit is identical to $f(x)$, so (\ref{eq:inversion_l1}) is proved. 
\subsection{The only-if part} As in Paragraph \ref{subsec:onlyif}, the aim is to prove (\ref{eq:planch25}), 
\begin{equation} \label{eq:planch25_bis}
\int_M (\psi*\psi^{\scriptscriptstyle \dagger})(x)f(x)\hspace{0.02cm}\mathrm{vol}(dx) = 
\int_{\mathfrak{a}^*} |\hat{\psi}(t)|^2\hat{f}(t)\hspace{0.02cm}\alpha(dt)
\end{equation}
which then implies the spectral representation (\ref{eq:planch2}). Since $\psi*\psi^{\scriptscriptstyle \dagger} \in C^{\hspace{0.03cm}c}_H(M)$, it follows from (\ref{eq:inverse_sphericaltransform}) and (\ref{eq:convtheorem}) that
$$
(\psi*\psi^{\scriptscriptstyle \dagger})(x) = \int_{\mathfrak{a}^*} \varphi^{}_t(x)\hspace{0.02cm}|\hat{\psi}(t)|^2\hspace{0.02cm}\alpha(dt).
$$
Replacing this under the integral on the left-hand side of (\ref{eq:planch25_bis}), the right-hand side is obtained by using the definition (\ref{eq:sphericaltransform}) of $\hat{f}$. Now, everything follows from (\ref{eq:planch2}) as in Paragraph \ref{subsec:onlyif}. 
\hfill \proofbox

\section{The hyperbolic secant in three dimensions} \label{app: hyperbolic secant 3D}
Recall the hyperbolic secant kernel from \ref{subsubsec:hsecant}. This is here considered in the case where $M$ is a hyperbolic space. The aim is to apply the 
L$^{\!\scriptscriptstyle 2}$-$\hspace{0.02cm}$Godement theorem, with the help of the spherical transform pair (\ref{eq:h3_sphericaltransform}). 

Setting $g(r) = (\cosh(r))^{-a}$ and $f(x) = g(d(o,x))$, it follows that $f$ is continuous and in $L^2_H(M)$ when $a > 1$. This is because,
$$
\int_M|f(x)|^2 = 4\pi\int^\infty_0 (\cosh(r))^{-2a}\hspace{0.02cm}\sinh^2(r)dr = \int^\infty_1 u^{-2a}(u^2-1)^{\frac{1}{2}}du \,< \infty
$$
as follows by integrating in geodesic spherical coordinates. Thus, the L$^{\!\scriptscriptstyle 2}$-$\hspace{0.02cm}$Godement theorem can be applied to $f$. The spherical transform $\hat{f}$ can be found from Lemma \ref{lem:h3_sine}, at least for integer values of $a$. Indeed, the moment-generating function $G$, required in the lemma, reads
$$
G(s) = \int^{+\infty}_{-\infty} (\cosh(r))^{-a}\,e^{sr}dr.
$$
When $a$ is an (of course, positive) integer, this can be obtained using the residue theorem~\cite{secantlaw}. Precisely, if $a$ is odd, $a = 2n+1$,
\begin{equation} \label{eq:secantMGF_odd}
  G(\mathrm{i}\sigma) = \frac{\pi\,\mathrm{sech}(\pi \sigma/2)}{(2n)!}\prod^{n-1}_{m=0} \left[\sigma^2 + (2m+1)^2\hspace{0.02cm}\right]
\end{equation}
and if $a$ is even, $a = 2n$,  
\begin{equation} \label{eq:secantMGF_even}
  G(\mathrm{i}\sigma) = \frac{\pi \sigma\,\mathrm{csch}(\pi \sigma/2)}{(2n-1)!}\prod^{n-1}_{m=1} \left[\sigma^2 + (2m)^2\hspace{0.02cm}\right]
\end{equation}
where $\mathrm{sech} = 1/\cosh$ and $\mathrm{csch} = 1/\sinh$ are the hyperbolic secant and cosecant, and where the imaginary part of $\sigma$ is $\geq -1$. With $G$ given by (\ref{eq:secantMGF_odd}) and (\ref{eq:secantMGF_even}), the spherical transform $\hat{f}$ can be computed using (\ref{eq:h3_sine}). Letting $H(\sigma) = G(\mathrm{i}\sigma)$, this yields
\begin{equation} \label{eq:h3_spherical_sech}
\hat{f}(t) =   \frac{\pi}{t}\,\mathrm{Im}\left\lbrace
H(t-\mathrm{i}) - H(t+\mathrm{i})\right\rbrace.
\end{equation}

\begin{lemma} \label{app:Gamma id}
    If $z\in \mathbb{C}\setminus \mathbb{Z}$ and $n$ is a positive integer, then
    \begin{align*}
        \Gamma(n+z)\Gamma(n-z) &= -\frac{\pi}{z\sin(\pi z)}\prod_{m=0}^{n-1}\left(m^2-z^2\right), \\
        \Gamma\left(n+\frac{1}{2}+z\right)\Gamma\left(n+\frac{1}{2}-z\right) &= \frac{\pi}{\cos(\pi z)}\prod_{m=0}^{n-1}\left[\left(m+\frac{1}{2}\right)^2-z^2\right].
    \end{align*}
\end{lemma}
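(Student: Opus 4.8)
The plan is to reduce both products to the Euler reflection formula $\Gamma(w)\Gamma(1-w)=\pi/\sin(\pi w)$ after first telescoping the Gamma functions with the functional equation $\Gamma(w+1)=w\,\Gamma(w)$. Iterating the latter $n$ times gives $\Gamma(n+w)=\Gamma(w)\prod_{k=0}^{n-1}(w+k)$ at every $w$ where the factors are defined; this is the only structural fact I need, as it converts an argument shifted by $n$ into a base Gamma factor times an explicit polynomial in the shift.

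For the first identity I would apply this with $w=z$ and with $w=-z$, obtaining $\Gamma(n+z)\Gamma(n-z)=\Gamma(z)\Gamma(-z)\prod_{k=0}^{n-1}(z+k)(k-z)$. Each paired factor collapses as $(z+k)(k-z)=k^2-z^2$, which already produces the advertised product $\prod_{k=0}^{n-1}(k^2-z^2)$. It then remains to evaluate $\Gamma(z)\Gamma(-z)$: writing $\Gamma(-z)=-\Gamma(1-z)/z$ (from $\Gamma(1-z)=-z\,\Gamma(-z)$) and invoking reflection at $w=z$ gives $\Gamma(z)\Gamma(-z)=-\tfrac1z\,\Gamma(z)\Gamma(1-z)=-\pi/(z\sin(\pi z))$, which is exactly the prefactor claimed. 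The hypothesis $z\notin\mathbb{Z}$ is precisely what keeps $\sin(\pi z)\neq 0$ and all Gamma factors finite.

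The second identity follows the same recipe with the base point shifted by $\tfrac12$. Taking $w=\tfrac12+z$ and $w=\tfrac12-z$ in the telescoping identity yields $\Gamma(n+\tfrac12+z)\Gamma(n+\tfrac12-z)=\Gamma(\tfrac12+z)\Gamma(\tfrac12-z)\prod_{k=0}^{n-1}(\tfrac12+k+z)(\tfrac12+k-z)$, and each paired factor is $(k+\tfrac12)^2-z^2$. For the prefactor I would apply reflection at $w=\tfrac12+z$, whose complement is $1-w=\tfrac12-z$, giving $\Gamma(\tfrac12+z)\Gamma(\tfrac12-z)=\pi/\sin(\pi(\tfrac12+z))=\pi/\cos(\pi z)$ after using $\sin(\tfrac\pi2+\pi z)=\cos(\pi z)$. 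Assembling these factors gives the second claim.

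The computation is essentially bookkeeping, so there is no deep obstacle; the only point requiring care is the domain of validity. Reflection needs its argument to avoid $\mathbb{Z}$, so the first identity is genuinely valid exactly on $z\notin\mathbb{Z}$ as stated, whereas the second strictly requires $\tfrac12+z\notin\mathbb{Z}$, i.e. $z\notin\mathbb{Z}+\tfrac12$. Both sides are meromorphic in $z$, so I would close the argument by observing that the stated equalities persist by analytic continuation at the half-integer points excluded above, where an apparent pole of $\pi/\cos(\pi z)$ is matched by a vanishing factor of the product (or by a genuine pole of the left-hand Gamma factors); this is why the single hypothesis $z\notin\mathbb{Z}$ suffices for the intended application.
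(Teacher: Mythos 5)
Your proposal is correct and follows essentially the same route as the paper: iterating the functional equation $\Gamma(w+1)=w\,\Gamma(w)$ (the paper phrases this as an induction on $n$) to peel off the polynomial factors, then evaluating the base products $\Gamma(z)\Gamma(-z)$ and $\Gamma(\tfrac{1}{2}+z)\Gamma(\tfrac{1}{2}-z)$ via Euler's reflection formula, including the same derivation of the $\pi/\cos(\pi z)$ variant by shifting $z$ by $\tfrac{1}{2}$. Your extra remark about the half-integer points in the second identity is a sensible refinement that the paper omits, but it does not change the argument.
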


\begin{proof}
The proof of the first equality follows by induction using Euler's reflection formula $\Gamma(z)\Gamma(1-z)=\pi/\sin(\pi z)$ for $z\notin \mathbb{Z}$ and the identity $\Gamma(z+1)=z\Gamma(z)$. The second equality follows in the same manner using the formula $\Gamma(1/2+z)\Gamma(1/2-z)=\pi/\cos(\pi z)$, which itself follows from replacing $z$ with $z+1/2$ in Euler's reflection formula. 
\end{proof}

\begin{proposition}
The hyperbolic secant kernel is positive-definite in three-dimensional hyperbolic space whenever $a$ is a positive integer.
\end{proposition}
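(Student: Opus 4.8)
The plan is to invoke the L$^{\!\scriptscriptstyle 2}$-Godement theorem (Theorem \ref{prop:l2god}) for the $H$-invariant function $f(x)=(\cosh(d(o,x)))^{-a}$. As recorded at the start of this appendix, $f$ is continuous and belongs to $L^2_H(M)$ for every integer $a\geq 2$, so the theorem applies and it suffices to check that its spherical transform $\hat f$ is nonnegative almost everywhere and integrable against the Plancherel measure $|c(t)|^{-2}\,dt$, which for three-dimensional hyperbolic space is proportional to $t^2\,dt$ by (\ref{eq:h3_inverse_sphericaltransform}). The transform itself is to be computed from Lemma \ref{lem:h3_sine} through (\ref{eq:h3_spherical_sech}), so everything hinges on an explicit evaluation of $H(\sigma)=G(\mathrm{i}\sigma)$ at the shifted arguments $\sigma=t\pm\mathrm{i}$.

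The crucial step is to recognise, using Lemma \ref{app:Gamma id}, that both product formulas (\ref{eq:secantMGF_odd}) and (\ref{eq:secantMGF_even}) collapse into the single closed form $H(\sigma)=\tfrac{2^{a-1}}{(a-1)!}\,\Gamma(\tfrac a2-\tfrac{\mathrm{i}\sigma}{2})\,\Gamma(\tfrac a2+\tfrac{\mathrm{i}\sigma}{2})$, valid as an identity of meromorphic functions and uniform across the parities of $a$ (the $\mathrm{sech}$ or $\mathrm{csch}$ prefactor cancels against the $\cos$ or $\sin$ thrown off by the reflection formula, and in both cases the leading constant works out to $2^{a-1}/(a-1)!$). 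I would then set $\sigma=t\pm\mathrm{i}$: each $\Gamma$-argument shifts by $\pm\tfrac12$, and after pulling out the extra unit with the functional equation $\Gamma(z+1)=z\,\Gamma(z)$, the terms $H(t-\mathrm{i})$ and $H(t+\mathrm{i})$ share the common positive factor $\lvert\Gamma(\tfrac{a-1}{2}+\tfrac{\mathrm{i}t}{2})\rvert^2$ and their difference telescopes to $H(t-\mathrm{i})-H(t+\mathrm{i})=\tfrac{2^{a-1}}{(a-1)!}\,(\mathrm{i}t)\,\lvert\Gamma(\tfrac{a-1}{2}+\tfrac{\mathrm{i}t}{2})\rvert^2$. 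Feeding this into (\ref{eq:h3_spherical_sech}) the factor $\mathrm{i}t$ cancels the prefactor $1/t$, leaving $\hat f(t)=\tfrac{\pi\,2^{a-1}}{(a-1)!}\,\lvert\Gamma(\tfrac{a-1}{2}+\tfrac{\mathrm{i}t}{2})\rvert^2\geq 0$ for all $t$, which is the required positivity.

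Integrability is then a routine consequence of Stirling's estimate $\lvert\Gamma(c+\mathrm{i}y)\rvert^2\sim 2\pi\,\lvert y\rvert^{2c-1}e^{-\pi\lvert y\rvert}$ as $\lvert y\rvert\to\infty$: with $c=(a-1)/2$ one gets $\hat f(t)=O(t^{\,a-2}e^{-\pi t/2})$, so $\hat f(t)\,t^2$ decays exponentially and is integrable on $(0,\infty)$, while $\hat f$ stays bounded near $t=0$ once $a\geq 2$. Theorem \ref{prop:l2god} then yields positive-definiteness for every integer $a\geq 2$; the boundary value $a=1$, where $f$ lies in neither $L^1_H(M)$ nor $L^2_H(M)$, I would recover separately by noting that the same nonnegative closed form for $\hat f$ persists for real $a>1$ and letting $a\downarrow 1$, since pointwise limits of positive-definite functions are positive-definite. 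I expect the main obstacle to be precisely the unification in the second paragraph --- confirming that the reflection-formula factors cancel exactly and that the odd and even cases produce one and the same constant, together with careful bookkeeping of the half-integer shifts induced by $\sigma=t\pm\mathrm{i}$; by contrast the membership $f\in L^2_H(M)$, the Stirling bound, and the final appeal to the theorem are all routine.
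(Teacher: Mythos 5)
Your proof is correct and follows essentially the same route as the paper's: apply the L$^{\!\scriptscriptstyle 2}$-Godement theorem, rewrite $H(\sigma)=G(\mathrm{i}\sigma)$ as a product of Gamma functions via Lemma \ref{app:Gamma id}, and telescope $H(t-\mathrm{i})-H(t+\mathrm{i})$ using $\Gamma(z+1)=z\Gamma(z)$; your unified formula $\hat f(t)=\frac{\pi\,2^{a-1}}{(a-1)!}\bigl|\Gamma\bigl(\frac{a-1}{2}+\frac{\mathrm{i}t}{2}\bigr)\bigr|^2$ is a correct merging of the paper's two case-by-case answers (it reduces to $\frac{4^n\pi}{(2n)!}\bigl|\Gamma\bigl(n+\frac{\mathrm{i}t}{2}\bigr)\bigr|^2$ for $a=2n+1$ and to $\frac{4^n n\pi}{(2n)!}\bigl|\Gamma\bigl(n-\frac12+\frac{\mathrm{i}t}{2}\bigr)\bigr|^2$ for $a=2n$). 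Your only genuine additions are the explicit integrability check against $t^2\,dt$ and the observation that $a=1$ falls outside $L^2_H(M)$ so the theorem does not apply directly there --- a point the paper's proof passes over --- and your proposed limiting argument for that case is sound, though it needs the Gamma-product form of $G$ for non-integer $a>1$ (a standard beta integral) rather than the residue formulas (\ref{eq:secantMGF_odd})--(\ref{eq:secantMGF_even}).
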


\begin{proof}
If $a$ is odd with $a = 2n+1$, then
\begin{align*}
    H(\sigma)=G(i\sigma)
&=\frac{4^n\pi\,\mathrm{sech}(\pi \sigma/2)}{(2n)!}\prod^{n-1}_{m=0} \left[ \left(m+\frac{1}{2}\right)^2-\left(\frac{i\sigma}{2}\right)^2\hspace{0.02cm}\right] \\
&= \frac{4^n}{(2n)!}\hspace{0.02cm}\Gamma\left(n+\frac{1}{2}+i\frac{\sigma}{2}\right)\Gamma\left(n+\frac{1}{2}-i\frac{\sigma}{2}\right)
\end{align*}
using Lemma \ref{app:Gamma id} with $z=i\sigma/2$. Thus,
\begin{align*}
H(t-\mathrm{i}) &- H(t+\mathrm{i})
= \frac{4^n}{(2n)!}\left[
\Gamma\left(n+1+\frac{it}{2}\right)\Gamma\left(n-\frac{it}{2}\right)-\Gamma\left(n+\frac{it}{2}\right)\Gamma\left(n+1-\frac{it}{2}\right)
\right] \\
&=\frac{4^n}{(2n)!}\left[\left(n+\frac{it}{2}\right)\Gamma\left(n+\frac{it}{2}\right)\Gamma\left(n-\frac{it}{2}\right)-\left(n-\frac{it}{2}\right)\Gamma\left(n+\frac{it}{2}\right)\Gamma\left(n-\frac{it}{2}\right)\right] \\
&= i\frac{4^n t}{(2n)!}\hspace{0.02cm}\Gamma\left(n+\frac{it}{2}\right)\Gamma\left(n-\frac{it}{2}\right) = i\frac{4^n t}{(2n)!}\hspace{0.02cm}\left|\Gamma\left(n+\frac{it}{2}\right)\right|^2
\end{align*}
where we have used the fact that $t \in \mathbb{R}$ and $\overline{\Gamma(z)}=\Gamma(\overline{z})$ for $z\in \mathbb{C}$. Thus, (\ref{eq:h3_spherical_sech}) reduces to
\begin{equation*}
   \hat{f}(t) =  \frac{4^n \pi}{(2n)!}\hspace{0.02cm}\left|\Gamma\left(n+\frac{it}{2}\right)\right|^2 \geq 0
\end{equation*}
for any $t\in \mathbb{R}$. If $a$ is even with $a = 2n$, a similar calculation using Lemma \ref{app:Gamma id} yields
\begin{equation*}
   \hat{f}(t) =  \frac{4^n n \pi}{(2n)!}\hspace{0.02cm}\left|\Gamma\left(n-\frac{1}{2}+\frac{it}{2}\right)\right|^2 \geq 0
\end{equation*}
for any $t\in\mathbb{R}$, which completes the proof.
\end{proof}

\end{document}